\newtheorem{theorem}{Theorem}
\newtheorem{lemma}[theorem]{Lemma}
\definecolor{mydarkred}{rgb}{0.6,0,0}
\definecolor{mydarkgreen}{rgb}{0,0.6,0}
\DeclareMathOperator*{\maximize}{maximize}
\DeclareMathOperator*{\argmin}{argmin}
\DeclareMathOperator*{\argmax}{argmax}
\newcommand{\rP}{\mathrm{P}}
\newcommand{\rN}{\mathrm{N}}
\newcommand{\rU}{\mathrm{U}}
\newcommand{\thetap}{\theta_\mathrm{P}^{}}
\newcommand{\thetan}{\theta_\mathrm{N}^{}}
\newcommand{\bxp}{\boldsymbol{x}^\mathrm{P}}
\newcommand{\bxu}{\boldsymbol{x}^\mathrm{U}}
\newcommand{\np}{{n_\mathrm{P}^{}}}
\newcommand{\nun}{{n_\mathrm{U}^{}}}
\newcommand{\bb}{{\boldsymbol{b}}}
\newcommand{\bx}{{\boldsymbol{x}}}
\newcommand{\bh}{{\boldsymbol{h}}}
\newcommand{\bH}{{\boldsymbol{H}}}
\newcommand{\bI}{{\boldsymbol{I}}}
\newcommand{\bu}{{\boldsymbol{u}}}
\newcommand{\bU}{{\boldsymbol{U}}}
\newcommand{\bv}{{\boldsymbol{v}}}
\newcommand{\bbeta}{{\boldsymbol{\beta}}}
\newcommand{\bmu}{{\boldsymbol{\mu}}}
\newcommand{\bSigma}{{\boldsymbol{\Sigma}}}
\newcommand{\bphi}{{\boldsymbol{\phi}}}
\newcommand{\bzero}{{\boldsymbol{0}}}
\newcommand{\bbetah}{{\boldsymbol{\widehat{\beta}}}}
\newcommand{\bhhp}{\boldsymbol{\widehat{h}}{}^\rP}
\newcommand{\bHhu}{\boldsymbol{\widehat{H}}{}^\rU}
\newcommand{\calB}{\mathcal{B}}
\newcommand{\calF}{\mathcal{F}}
\newcommand{\calO}{\mathcal{O}}
\newcommand{\calU}{\mathcal{U}}
\newcommand{\calW}{\mathcal{W}}
\newcommand{\dbx}{\mathrm{d}\boldsymbol{x}}
\newcommand{\bbR}{\mathbb{R}}
\newcommand{\bbS}{\mathbb{S}}
\newcommand{\ph}{\widehat{p}}
\newcommand{\gh}{\widehat{g}}
\newcommand{\wh}{\widehat{w}}
\newcommand{\Jh}{\widehat{J}}
\newcommand{\PU}{\mathrm{PU}}
\newcommand{\SMI}{\mathrm{SMI}}
\newcommand{\PUSMI}{\mathrm{PU}\textrm{-}\mathrm{SMI}}
\newcommand{\PUSMIh}{\widehat{\PUSMI}}
\newcommand{\PNSMI}{\mathrm{PN}\textrm{-}\mathrm{SMI}}
\newcommand{\PNSMIh}{\widehat{\PNSMI}}
\title{
	Information-Theoretic Representation Learning 
	for Positive-Unlabeled Classification
	}
\author[1,2]{Tomoya Sakai\footnote{The affiliation is as of March 2018.}}
\newcommand\NoteMark{\footnotemark[\arabic{footnote}]} 
\author[1,2]{Gang Niu\protect\NoteMark}
\author[2,1]{Masashi Sugiyama\protect\NoteMark}
\affil[1]{Graduate School of Frontier Sciences, \protect\\ 
	The University of Tokyo, Japan}
\affil[2]{Center for Advanced Intelligence Project, \protect\\ 
	RIKEN, Japan}
\date{}
\begin{document}
\sloppy
\maketitle

\begin{abstract}
Recent advances in weakly supervised classification
allow us to train a classifier only
from \emph{positive and unlabeled} (PU) data.
However, existing PU classification methods typically require
an accurate estimate of the class-prior probability,
which is a critical bottleneck
particularly for high-dimensional data.
This problem has been commonly addressed
by applying principal component analysis in advance,
but such unsupervised dimension reduction can collapse
underlying class structure.
In this paper, we propose a novel representation learning method from PU data
based on the \emph{information-maximization principle}.
Our method does not require class-prior estimation
and thus can be used as a preprocessing method for PU classification.
Through experiments, we demonstrate that our method
combined with deep neural networks highly improves the accuracy of PU class-prior
estimation, leading to state-of-the-art PU classification performance.
\end{abstract}

\section{Introduction}
\label{sec:intro}

In real-world applications, 
it is conceivable that only \emph{positive and unlabeled} (PU) data
are available for training a classifier.
For instance, in land-cover image classification, 
images of urban regions can be easily labeled,
while images of non-urban regions are difficult to 
annotate due to high diversity of non-urban regions
containing, e.g., forest, seas, grasses, and soil
\citep{TGRS:Li+etal:2011}. 
To cope with such situations, PU classification has been actively studied
\citep{ALT:Letouzey2000,KDD:Elkan+Noto:2008,ICML:duPlessis+etal:2015},
and the state-of-the-art method
allows us to systematically train deep neural networks only from PU data
\citep{NIPS:Kiryo+etal:2017}.

However, existing PU classification methods
typically require an estimate of the class-prior probability,
and their performance is sensitive to the quality of 
class-prior estimation \citep{NIPS:Kiryo+etal:2017}.
Although various class-prior estimation methods from PU data have been proposed
so far \citep{IEICE:duPlessis+Sugiyama:2014,ICML:Ramaswamy+etal:2016,NIPS:Jain+etal:2016,ML:duPlessis+etal:2017,UAI:Norhcutt+etal:2017},
accurate estimation of the class-prior
is still highly challenging particularly for high-dimensional data.

In practice, principal component analysis is commonly used 
to reduce the data dimensionality in advance
\citep{ICML:Ramaswamy+etal:2016,ML:duPlessis+etal:2017}.
However, such unsupervised dimension reduction completely abandons label information
and thus the underlying class structure may be smashed.
As a result,  class-prior estimation often becomes even more difficult
after dimension reduction.

The goal of this paper is to cope with this problem by proposing
a representation learning method that can be executed
only from PU data.
Our method is developed within the framework of \emph{information maximization}
\citep{Comp:Linsker:1988}.

\emph{Mutual information} (MI) \citep{book:Cover+Thomas:2006}
is a statistical dependency measure between random variables
that is popularly used in information-theoretic machine learning
\citep{JMLR:Torkkola:2003,NIPS:Krause+etal:2010}.
However, empirically approximating MI from continuous-valued training data is
not straightforward \citep{PhysRevE:Moon:1995,PhysRevE:Kraskov+etal:2004,PRE:Khan+etal:2007,NC:Hulle:2005,FSDM:Suzuki+etal:2008}
and is often sensitive to outliers \citep{Biometrika:Basu+etal:1998,AISM:Sugiyama+etal:2012}.
For this reason, we employ a squared-loss variant 
of mutual information (SMI) \citep{BMCBio:Suzuki+etal:2009a,Entropy:Sugiyama:2013},
whose empirical estimator is known to be robust to outliers
and possess superior numerical properties \citep{MLJ:Kanamori+etal:2012}.

Our contributions are summarized as follows:
\begin{itemize}
\item We first develop a novel estimator of SMI that can be computed only from PU data,
  and prove its convergence to the optimal estimate of SMI in the optimal
  parametric rate when the linear-in-parameter model is used (Section~\ref{sec:PU-SMI}).
\item Based on this PU-SMI estimator, we then propose 
  a representation learning method that can be executed \emph{without}
  estimating the class-prior probabilities of unlabeled data
  (Section~\ref{sec:pu-rep-learn}).
\item Finally, we experimentally demonstrate that
  our PU representation learning method combined with deep neural networks
  highly improves the accuracy of PU class-prior estimation,
  and consequently the accuracy of PU classification can also be boosted
  significantly (Section~\ref{sec:experiments}).
\end{itemize}

\section{SMI}
In this section, we review the definition of ordinary MI and its variant, SMI.

Let $\bx\in\mathbb{R}^d$ be an input pattern, 
$y\in\{\pm 1\}$ be a corresponding class label,
and $p(\bx,y)$ be the underlying joint density,
where $d$ is a positive integer. 

\emph{Mutual information} (MI) \citep{book:Cover+Thomas:2006}
is a statistical dependency measure defined as
\begin{align*}
\mathrm{MI}&:=\sum_{y=\pm1}
 	\int p(\bx,y)\log\left(\frac{p(\bx,y)}{p(\bx)p(y)}\right)
             \dbx,
\end{align*}
where $p(\bx)$ is the marginal density of $\bx$ 
and $p(y)$ is the probability mass of $y$.  
MI can be regarded as the \emph{Kullback-Leibler divergence}
from $p(\bx,y)$ to $p(\bx)p(y)$,
and therefore MI is non-negative and takes zero if and only if $p(\bx,y)=p(\bx)p(y)$,
i.e., $\bx$ and $y$ are statistically independent.
This property allows us to evaluate the dependency between $\bx$ and $y$.
However, empirically approximating MI from continuous data is
not straightforward \citep{PhysRevE:Moon:1995,PhysRevE:Kraskov+etal:2004,PRE:Khan+etal:2007,NC:Hulle:2005,FSDM:Suzuki+etal:2008}
and is often sensitive to outliers \citep{Biometrika:Basu+etal:1998,AISM:Sugiyama+etal:2012}.

To cope with this problem, \emph{squared-loss MI} (SMI) has been proposed
\citep{BMCBio:Suzuki+etal:2009a},
which is a squared-loss variant of MI defined as
\begin{align}
\SMI&:=\sum_{y=\pm1}\frac{p(y)}{2}
	\int\Big(\frac{p(\bx,y)}{p(\bx)p(y)}-1
	\Big)^2p(\bx)\dbx. 
	\label{eq:smi-def}
\end{align}
SMI can be regarded as the \emph{Pearson divergence}
\citep{PhMag:Pearson:1900} from $p(\bx,y)$ to $p(\bx)p(y)$.
SMI is also non-negative and takes zero if and only if $\bx$ and $y$
are independent.

So far, methods for estimating SMI from positive and negative samples
and SMI-based machine learning algorithms have been explored extensively,
and their effectiveness has been demonstrated \citep{Entropy:Sugiyama:2013}.

\section{SMI Estimation from PU Data}
\label{sec:PU-SMI}
The goal of this paper is to develop a representation learning method from PU data.
To this end, we propose an estimator of SMI that can be computed only from PU
data in this section.

\subsection{SMI with PU Data}
Suppose that we are given PU data \citep{Biometrics:Ward+etal:2009}:
\begin{align*}
\{\bxp_i\}^\np_{i=1} &\stackrel{\mathrm{i.i.d.}}{\sim}
p(\bx\mid y=+1) , \\
\{\bxu_k\}^\nun_{k=1} &\stackrel{\mathrm{i.i.d.}}{\sim}
p(\bx)=\thetap p(\bx\!\mid\! y=+1) 
	 + \thetan p(\bx\!\mid\! y=-1) , 
\end{align*}
where $\thetap:=p(y=+1)$ and $\thetan:=p(y=-1)$ 
are the class-prior probabilities.

First, we express SMI in Eq.~\eqref{eq:smi-def}
in terms of only the densities of PU data, without negative data 
(see Appendix~\ref{app:theorem:PUSMI=SMI} for its proof):
\begin{theorem}\label{theorem:PUSMI=SMI}
Let
\begin{align}
\PUSMI:=\frac{\thetap}{2\thetan}
	\int\Big(\frac{p(\bx\mid y=+1)}{p(\bx)}-1\Big)^2  p(\bx)\dbx. 
\label{PU-SMI}
\end{align}
Then we have $\PUSMI=\SMI$.
\end{theorem}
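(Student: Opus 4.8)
The plan is to collapse both terms of $\SMI$ onto a single density ratio using the linear constraint that the mixture model imposes. First I would rewrite $\SMI$ purely in terms of class-conditional densities: since $p(\bx,y)=p(\bx\mid y)\,p(y)$, we have $p(\bx,y)/(p(\bx)p(y)) = p(\bx\mid y)/p(\bx)$ for each $y\in\{\pm1\}$, so that
\begin{align*}
\SMI = \frac{\thetap}{2}\int\!\Big(\frac{p(\bx\mid y=+1)}{p(\bx)}-1\Big)^2 p(\bx)\,\dbx
 \;+\; \frac{\thetan}{2}\int\!\Big(\frac{p(\bx\mid y=-1)}{p(\bx)}-1\Big)^2 p(\bx)\,\dbx.
\end{align*}
The first term is already $\thetan\cdot\PUSMI$, so it suffices to show the second term equals $\thetap\cdot\PUSMI$; adding the two then gives $(\thetap+\thetan)\,\PUSMI = \PUSMI$.

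The key step is a pointwise identity between the ratios $r_+(\bx):=p(\bx\mid y=+1)/p(\bx)$ and $r_-(\bx):=p(\bx\mid y=-1)/p(\bx)$. Dividing the mixture relation $p(\bx)=\thetap\,p(\bx\mid y=+1)+\thetan\,p(\bx\mid y=-1)$ by $p(\bx)$ gives $\thetap\,r_+(\bx)+\thetan\,r_-(\bx)=1$ for $p$-almost every $\bx$, and since $\thetap+\thetan=1$ this rearranges to
\begin{align*}
r_-(\bx)-1 = -\frac{\thetap}{\thetan}\bigl(r_+(\bx)-1\bigr).
\end{align*}
Squaring yields $\bigl(r_-(\bx)-1\bigr)^2 = (\thetap/\thetan)^2\bigl(r_+(\bx)-1\bigr)^2$, so the second term of $\SMI$ becomes $\frac{\thetan}{2}\cdot\frac{\thetap^2}{\thetan^2}\int (r_+(\bx)-1)^2 p(\bx)\,\dbx = \thetap\cdot\frac{\thetap}{2\thetan}\int (r_+(\bx)-1)^2 p(\bx)\,\dbx = \thetap\cdot\PUSMI$, which is exactly what is needed.

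This argument is essentially elementary algebra, so I do not expect a substantial obstacle; the only points requiring a word of care are that the manipulations $p(\bx,y)/(p(\bx)p(y))=p(\bx\mid y)/p(\bx)$ and the division by $p(\bx)$ are valid only on $\{\bx:p(\bx)>0\}$, but this set carries full mass under $p(\cdot)$, $p(\cdot\mid y=+1)$ and $p(\cdot\mid y=-1)$, so none of the integrals are affected; and that $\thetan>0$ (equivalently $\thetap<1$) is implicitly assumed throughout, as it must be for $\PUSMI$ to be well-defined. The substantive content is simply the affine relation between the two density ratios forced by the mixture structure of $p(\bx)$.
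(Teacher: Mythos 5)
Your proof is correct and follows essentially the same route as the paper's: the same decomposition of $\SMI$ over the two classes, the same pointwise identity $\thetan\bigl(r_-(\bx)-1\bigr)=-\thetap\bigl(r_+(\bx)-1\bigr)$ derived from the mixture relation, and the same squaring step. Your bookkeeping via $(\thetap+\thetan)\,\PUSMI$ and the remarks on $p(\bx)>0$ and $\thetan>0$ are only cosmetic additions.
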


If PU densities $p(\bx\mid y=+1)$ and $p(\bx)$ are estimated from PU data,
the above $\PUSMI$ allows us to approximate SMI only from PU data.
However, such a naive approach works poorly due to 
hardness of density estimation and computing the ratio of estimated densities
further magnifies the estimation error
\citep{book:Sugiyama+etal:2012}.

\subsection{PU-SMI Estimation}
\label{sec:pusmi-est}
Here, we propose a more sophisticated approach to estimating $\PUSMI$
from PU data.

First, we give the following theorem,
which gives a lower-bound of $\PUSMI$
(see Appendix~\ref{app:theorem:PUSMI-lowerbound} for its proof):
\begin{theorem}\label{theorem:PUSMI-lowerbound}
For any function $f(\bx)$,
\begin{align}
\PUSMI&\ge\frac{\thetap}{\thetan}
\Big(-J_\PU(f)-\frac{1}{2}\Big),
\label{PUSMI-lowerbound}
\end{align} 
where
\begin{align*}
  J_\PU(f):=\frac{1}{2}\int f^2(\bx)p(\bx)\dbx - \int f(\bx)p(\bx\mid y=+1)\dbx,
\end{align*}
and the equality holds if and only if
\begin{align*}
f(\bx)=\frac{p(\bx\mid y=+1)}{p(\bx)} .
\end{align*}
\end{theorem}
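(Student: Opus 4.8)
The plan is to prove the bound by \emph{completing the square}: I will rewrite $\PUSMI$ so that it differs from the right-hand side of Eq.~\eqref{PUSMI-lowerbound} by a single nonnegative term, namely a squared density-ratio deviation integrated against $p(\bx)$. First I would set $r(\bx) := p(\bx\mid y=+1)/p(\bx)$ and expand the square in Eq.~\eqref{PU-SMI}. Using the normalizations $\int p(\bx)\dbx = 1$ and $\int r(\bx)p(\bx)\dbx = \int p(\bx\mid y=+1)\dbx = 1$, the cross term and the constant term collapse, leaving
\begin{align*}
\PUSMI = \frac{\thetap}{2\thetan}\Big(\int r^2(\bx)p(\bx)\dbx - 1\Big).
\end{align*}
In parallel, the stated functional is $J_\PU(w) = \frac{1}{2}\int w^2(\bx)p(\bx)\dbx - \int w(\bx)r(\bx)p(\bx)\dbx$, so that
\begin{align*}
-J_\PU(w) - \tfrac{1}{2} = -\tfrac{1}{2}\int w^2(\bx)p(\bx)\dbx + \int w(\bx)r(\bx)p(\bx)\dbx - \tfrac{1}{2}.
\end{align*}

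Subtracting the two expressions, I would compute
\begin{align*}
\frac{\thetan}{\thetap}\,\PUSMI - \Big(-J_\PU(w) - \tfrac{1}{2}\Big)
= \frac{1}{2}\int \big(w(\bx) - r(\bx)\big)^2 p(\bx)\dbx \;\ge\; 0 ,
\end{align*}
where the $-1$ coming from $\PUSMI$ cancels against $-\tfrac{1}{2}-\tfrac{1}{2}$, and the remaining $w^2$, cross, and $r^2$ terms assemble into the perfect square. Since $\thetap/\thetan = p(y=+1)/p(y=-1) > 0$, multiplying through by this factor yields Eq.~\eqref{PUSMI-lowerbound}. The inequality becomes an equality exactly when $\int (w - r)^2 p\,\dbx = 0$, i.e.\ when $w(\bx) = p(\bx\mid y=+1)/p(\bx)$ for $p$-almost every $\bx$, which is the claimed characterization.

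There is essentially no hard step here: this is the standard variational lower bound for a Pearson-type divergence, and the only novelty is that the quantity being bounded is the PU reformulation $\PUSMI$ rather than SMI itself (which is already handled by Theorem~\ref{theorem:PUSMI=SMI}). The points that deserve a word of care are that $r(\bx)$ is well defined on the support of $p$, outside of which $p(\bx\mid y=+1)$ also vanishes under the PU model so that all integrands remain meaningful; that $\thetan = p(y=-1) > 0$, so the prefactor $\thetap/\thetan$ is finite and positive; and that the equality condition is to be understood modulo $p$-null sets. None of these is a genuine obstacle.
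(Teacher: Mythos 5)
Your proof is correct and is essentially the paper's own argument in different clothing: the paper also rewrites $\PUSMI=\frac{\thetap}{\thetan}\big(\frac{1}{2}\int s^2(\bx)p(\bx)\dbx-\frac{1}{2}\big)$ with $s=p(\bx\mid y=+1)/p(\bx)$ and then applies the pointwise Fenchel inequality $\frac{1}{2}s^2\ge ws-\frac{1}{2}w^2$, which is exactly your completed square $\frac{1}{2}(w-s)^2\ge 0$ integrated against $p(\bx)$. Your phrasing has the minor advantage that the equality condition ($w=s$ for $p$-almost every $\bx$) drops out immediately from $\int(w-s)^2p\,\dbx=0$ rather than being deferred to a Fenchel-duality citation.
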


While $\PUSMI$ itself contains $p(\bx\mid y=+1)$ and $p(\bx)$
in a complicated way, the lower bound 
consists only of the expectations over $p(\bx\mid y=+1)$ and $p(\bx)$.
Thus, the lower bound can be immediately approximated empirically.

Based on this theorem, we maximize
an empirical approximation to the lower bound \eqref{PUSMI-lowerbound},
which is expressed as
\begin{align*}
\wh:=\argmin_{w\in\mathcal{W}}\; \Jh_\PU(w),
\end{align*}
where
\begin{align}
\Jh_\PU(w):=\frac{1}{2\nun}\sum^\nun_{k=1}w^2(\bxu_k)
	-\frac{1}{\np}\sum^\np_{i=1}w(\bxp_i) 
	\label{eq:pusmi-loss} 	
\end{align}
and $\mathcal{W}$ is a (user-defined) function class such as linear-in-parameter models,
kernel models, and neural networks.
In this optimization, we can drop
the unknown class-prior ratio ${\thetap}/{\thetan}$,
which is difficult to estimate accurately
 \citep{IEICE:duPlessis+Sugiyama:2014,ICML:Ramaswamy+etal:2016,NIPS:Jain+etal:2016,ML:duPlessis+etal:2017}.

Finally, our PU-SMI estimator is given as
\begin{align}
\PUSMIh=\frac{\thetap}{\thetan}\Big(-\Jh_\PU(\wh)- \frac{1}{2}\Big) .
\label{eq:pu-smi-loss-form}
\end{align}
$\PUSMIh$ includes the class-prior ratio $\thetap/\thetan$
only as a proportional constant.
Therefore, class-prior estimation is not needed
when we just want to maximize or minimize PU-SMI. 
We will utilize this excellent property in Section~\ref{sec:pu-rep-learn}
when we develop a representation learning method.

Note that if $W$ contains the true density-ratio function $p(\bx\mid y=+1)/p(\bx)$,
$\PUSMIh\to\PUSMI\; (\np,\nun\to\infty)$ with some regularity condition.  
On the other hand, if the function class does not contain the true density-ratio function,
there is a gap between $\PUSMI$ and $\PUSMIh$ even if $\np,\nun\to\infty$. 
Such a gap often arises in real-world applications because a function class
does not always include the true density-ratio function.
However, the gap may not be a critical issue in practice
as long as a reasonably flexible function class is chosen,
as demonstrated by the experiments in Section~\ref{sec:experiments}. 
Even though the gap may exist in practical implementation,   
we show that the classification performance can be improved by 
our proposed representation learning method.

\subsection{Analytic Solution for Linear-in-Parameter Models}
Our SMI estimator is applicable to any density-ratio model $w$.

If a neural network is used as $w$, the solution may be obtained by 
a stochastic gradient method
\citep{BOOK:Goodfellow+etal:2016,Tensorflow:2015,Caffe:Jia+etal:2014}.

Another candidate of the density-ratio model is a linear-in-parameter model:
\begin{align}
w(\bx)=\sum^b_{\ell=1}\beta_\ell\phi_\ell(\bx)
	=\bbeta^\top\bphi(\bx) ,
	\label{eq:lip-model}
\end{align}
where $\bbeta:=(\beta_1,\ldots,\beta_b)^\top\in\mathbb{R}^b$
is a vector of parameters, $^\top$ denotes the transpose,
$b$ is the number of parameters,
and
$\bphi(\bx):=(\phi_1(\bx),\ldots,\phi_b(\bx))^\top\in\mathbb{R}^b$
is a vector of basis functions.
This model allows us to obtain an analytic-form PU-SMI estimator.
Furthermore, the optimal convergence is theoretically guaranteed
as shown in Section~\ref{sec:theory}.

When the $\ell_2$-regularizer is included,
the optimization problem yields
\begin{align*}
\bbetah:=\argmin_{\bbeta}\; \frac{1}{2}\bbeta^\top\bHhu\bbeta
	-\bbeta^\top\bhhp + \frac{\lambda_\PU}{2}\|\bbeta\|_2^2,
\end{align*}
where $\lambda_\PU\ge0$ is the regularization parameter,
$\|\cdot\|_2$ denotes the $\ell_2$-norm, and 
\begin{align*}
\widehat{H}_{\ell,\ell'}^\rU&:=\frac{1}{\nun}\sum^\nun_{k=1} 
	\phi_\ell(\bxu_k)\phi_{\ell'}(\bxu_k), \\ 
\widehat{h}_\ell^\rP&:=\frac{1}{\np}\sum^\np_{i=1}\phi_\ell(\bxp_i) .
\end{align*}
Note that $\widehat{H}_{\ell,\ell'}^\rU$ is the $(\ell, \ell')$-th element of 
$\bHhu$ and $\widehat{h}_{\ell}^\rP$ is the $\ell$-th element of $\bhhp$. 
The solution can be obtained analytically 
by differentiating the objective function with respect to $\bbeta$
and set it to zero: $\bbetah=(\bHhu+\lambda_\PU\bI_b)^{-1}\bhhp$.
Finally, with the obtained estimator, 
we can compute an SMI approximator only from positive 
and unlabeled data:
\begin{align*}
\PUSMIh=\frac{\thetap}{\thetan}\Big(\bbetah^\top\bhhp
	-\frac{1}{2}\bbetah^\top\bHhu\bbetah
	-\frac{1}{2}\Big) .		
\end{align*}
  
Note that all hyper-parameters such as the regularization parameter
can be tuned by the value of $J_\PU$ approximated by  (cross-)validation samples.

\subsection{Convergence Analysis}
\label{sec:theory}
Here we analyze the convergence rate of 
learned parameters of the density-ratio model and the PU-SMI approximator
based on the \emph{perturbation analysis of optimization problems} 
\citep{SICON:Bonnans+Cominetti:1996,SIAM:Bonnans+Shapiro:1998}.

In our theoretical analysis, we focus on the linear-in-parameter model 
in Eq.~\eqref{eq:lip-model}.
We first define $\bbeta^{\ast\top}_{}\bphi(\bx)$ as the minimizer 
of the expected error, i.e., 
$\bbeta^\ast_{}:=\argmin_{\bbeta\in\bbR^b}\; J_\PU(\bbeta)$
and denote its estimator by 
$\bbetah=\argmin_{\bbeta\in\bbR^b}\; \Jh_\PU(\bbeta)$
in this analysis.
Note that the linear-in-parameter model is assumed as a simple baseline for
theoretical analysis.

For the linear-in-parameter model, we assume that the basis functions
satisfy $0\leq\phi_\ell(\bx)\leq 1$ for all $\ell=1,\ldots,b$,
and $\bHhu$ and $\bH^\rU$ are positive definite matrices.

Let 
\begin{align*}
\PUSMI^\ast&:=\frac{\thetap}{\thetan}
	\Big(-J_\PU(\bbeta^\ast)-\frac{1}{2}\Big)
\end{align*}
be the PU-SMI with $\bbeta^\ast$.
Similarly, $\PUSMIh$ is the estimate of the PU-SMI with $\bbetah$.
Let $\calO_p$ denote the order in probability.
Then we have the following convergence results
(its proof is given in Appendix~\ref{app:proof-rate}):
\begin{theorem}
\label{thm:rate}
As $\np,\nun\to\infty$, we have
\begin{align*}
\|\bbetah - \bbeta^\ast\|_2&=\calO_p(1/\sqrt{\np}+1/\sqrt{\nun}) , \\ 
|\PUSMI^\ast - \widehat{\PUSMI}|&=\calO_p(1/\sqrt{\np}+1/\sqrt{\nun}) .
\end{align*}
\end{theorem}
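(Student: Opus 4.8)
Because we restrict to the linear-in-parameter model \eqref{eq:lip-model}, both the population objective and the regularized empirical objective are strongly convex quadratics with closed-form minimizers, so the plan is to reduce everything to matrix perturbation plus concentration. Write $\bH^\rU:=\E[\bHhu]$, with $(\bH^\rU)_{\ell,\ell'}=\int\phi_\ell(\bx)\phi_{\ell'}(\bx)p(\bx)\dbx$, and $\bh^\rP:=\E[\bhhp]$, with $h^\rP_\ell=\int\phi_\ell(\bx)p(\bx\mid y=+1)\dbx$. Linear independence of the basis functions over $p(\bx)$ makes $\bH^\rU$ positive definite, so $\bbeta^\ast=(\bH^\rU)^{-1}\bh^\rP$ while $\bbetah=(\bHhu+\lambda_\PU\bI_b)^{-1}\bhhp$. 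The first step is to establish concentration of the empirical quantities: since $0\le\phi_\ell(\bx)\le 1$, every entry of $\bHhu$ (resp. $\bhhp$) is an average of i.i.d. random variables bounded in $[0,1]$, so a coordinatewise Chebyshev bound (or the central limit theorem) together with $b$ being fixed gives $\|\bHhu-\bH^\rU\|_2=\calO_p(1/\sqrt{\nun})$ and $\|\bhhp-\bh^\rP\|_2=\calO_p(1/\sqrt{\np})$.

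Next I would bound $\|\bbetah-\bbeta^\ast\|_2$. Let $\mu>0$ be the smallest eigenvalue of $\bH^\rU$. Since $\lambda_\PU\ge0$ and $\|\bHhu-\bH^\rU\|_2\to 0$ in probability, the event $\{\lambda_{\min}(\bHhu+\lambda_\PU\bI_b)\ge\mu/2\}$ has probability tending to one, and on it $\|(\bHhu+\lambda_\PU\bI_b)^{-1}\|_2\le 2/\mu$. On that event the identity
\[
\bbetah-\bbeta^\ast=(\bHhu+\lambda_\PU\bI_b)^{-1}\Big[(\bhhp-\bh^\rP)-(\bHhu-\bH^\rU)\bbeta^\ast-\lambda_\PU\bbeta^\ast\Big]
\]
yields $\|\bbetah-\bbeta^\ast\|_2\le\frac{2}{\mu}\big(\|\bhhp-\bh^\rP\|_2+M\|\bHhu-\bH^\rU\|_2+M\lambda_\PU\big)$, using $\|\bbeta^\ast\|_2\le M$. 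Combining with the first step and the assumption $\lambda_\PU=\calO_p(1/\sqrt{\np}+1/\sqrt{\nun})$ gives the first claim, and in particular $\|\bbetah\|_2\le M+\calO_p(1/\sqrt{\np}+1/\sqrt{\nun})=\calO_p(1)$.

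For the PU-SMI error, note $|\PUSMI^\ast-\PUSMIh|=\frac{\thetap}{\thetan}\,|J_\PU(\bbeta^\ast)-\Jh_\PU(\bbetah)|$, which I would split as $|J_\PU(\bbeta^\ast)-J_\PU(\bbetah)|+|J_\PU(\bbetah)-\Jh_\PU(\bbetah)|$. For the first term, optimality of $\bbeta^\ast$ for the quadratic $J_\PU$ gives $J_\PU(\bbetah)-J_\PU(\bbeta^\ast)=\frac12(\bbetah-\bbeta^\ast)^\top\bH^\rU(\bbetah-\bbeta^\ast)$, which is $\calO_p(1/\np+1/\nun)$ by the previous step, hence second-order and negligible. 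For the second term, $J_\PU(\bbetah)-\Jh_\PU(\bbetah)=\frac12\bbetah^\top(\bH^\rU-\bHhu)\bbetah-\bbetah^\top(\bh^\rP-\bhhp)$, which is $\calO_p(1/\sqrt{\np}+1/\sqrt{\nun})$ by the concentration bounds and $\|\bbetah\|_2=\calO_p(1)$; since $\thetap/\thetan$ is a fixed constant, the second claim follows. I expect the main obstacle to be not any individual estimate, each of which is elementary, but the probabilistic bookkeeping: controlling the inverse regularized Hessian uniformly on a high-probability event and checking that the $\calO_p$ constants can be taken independent of $\np,\nun$. The perturbation-analysis machinery of \citet{SIAM:Bonnans+Shapiro:1998} would be the route for a general, non-closed-form model $w$, but for the linear-in-parameter model the explicit solution makes the above matrix-perturbation argument self-contained.
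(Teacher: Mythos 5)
Your proof is correct, but it takes a genuinely different route from the paper's. The paper never inverts anything explicitly: it casts $\Jh_\PU^\lambda$ as a perturbation $J_\PU(\cdot,\bu)$ of the population objective with perturbation parameters $\bU^\rU=\bHhu-\bH^\rU$ and $\bu^\rP=\bhhp-\bh^\rP$, proves a second-order growth condition from strong convexity of $J_\PU$ (Lemma~\ref{lem:sec-growth-cond}) and a Lipschitz bound $\omega(\bu)=\calO(\|\bU^\rU\|_\mathrm{Fro}+\|\bu^\rP\|_2)$ on the gradient of the perturbation near $\bbeta^\ast$ (Lemma~\ref{lem:lip-mod}), and then invokes Proposition~6.1 of \citet{SIAM:Bonnans+Shapiro:1998} to conclude $\|\bbetah-\bbeta^\ast\|_2\leq\epsilon^{-1}\omega(\bu)$. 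You instead exploit the closed-form minimizers and derive the resolvent identity
\begin{align*}
\bbetah-\bbeta^\ast=(\bHhu+\lambda_\PU\bI_b)^{-1}\Big[(\bhhp-\bh^\rP)-(\bHhu-\bH^\rU)\bbeta^\ast-\lambda_\PU\bbeta^\ast\Big],
\end{align*}
which is algebraically correct and, combined with the high-probability lower bound on $\lambda_{\min}(\bHhu+\lambda_\PU\bI_b)$, gives the same rate. Your argument is self-contained and elementary, avoiding the external perturbation-analysis machinery, and it additionally reveals that the population excess risk $J_\PU(\bbetah)-J_\PU(\bbeta^\ast)=\frac12(\bbetah-\bbeta^\ast)^\top\bH^\rU(\bbetah-\bbeta^\ast)$ is second order, i.e.\ $\calO_p(1/\np+1/\nun)$; the cost is that it is tied to the linear-in-parameter model, whereas the paper's route is the one that would survive a non-closed-form $w$ (as you note yourself). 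For the second claim the two decompositions are mirror images --- the paper splits $|\Jh_\PU(\bbetah)-J_\PU(\bbeta^\ast)|$ at $\Jh_\PU(\bbeta^\ast)$ while you split at $J_\PU(\bbetah)$ --- and both yield $\calO_p(1/\sqrt{\np}+1/\sqrt{\nun})$ after multiplying by the fixed constant $\thetap/\thetan$. One small point of care: the paper simply assumes $\|\bbeta\|_2\leq M$ for all parameters, while you only use $\|\bbeta^\ast\|_2\leq M$ and then deduce $\|\bbetah\|_2=\calO_p(1)$ from the first claim; that ordering of steps is needed for your bound on $J_\PU(\bbetah)-\Jh_\PU(\bbetah)$ and you handle it correctly.
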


Theorem~\ref{thm:rate} guarantees that 
the convergence of the density-ratio estimator
and the PU-SMI approximator. 
In our setting, since $\np$ and $\nun$ can increase independently,
this is the optimal convergence rate without any additional 
assumption \citep{JMLR:Kanamori+etal:2009,MLJ:Kanamori+etal:2012}.

Theorem~\ref{thm:rate} shows
that both positive and unlabeled data contribute to
convergence. This implies that unlabeled data is directly used 
in the estimation rather than extracting the information of a data structure,
such as the cluster structure 
frequently assumed in semi-supervised learning \citep{book:Chapelle+etal:2006}.
The theorem also shows that the convergence rate of our method is dominated by 
the smaller size of positive or unlabeled data.

Note that since this analysis focuses on the linear-in-parameter model, 
there might be a gap between $\PUSMI$ and $\PUSMI^\ast$,
implying that $\PUSMI<\PUSMI^\ast$. 
The convergence analysis guarantees that $\PUSMIh$ with 
the linear-in-parameter model converges to $\PUSMI^\ast$,
but there might be an approximation error \citep{book:Mohri+etal:2012},
as discussed in Section~\ref{sec:pusmi-est}.  
   
\section{PU Representation Learning}
\label{sec:pu-rep-learn}
In this section, we propose a representation learning 
method based on PU-SMI maximization.
We extend the existing SMI-based dimension reduction \citep{NeCo:Suzuki:2013},
called \emph{least-squares dimension reduction} (LSDR),
to PU representation learning.
While LSDR only considers linear dimension reduction,
we extend it to non-linear dimension reduction by neural networks.
  
Let $\bv\colon\bbR^d\to\bbR^m$, where $m<d$, 
be a mapping from an input vector to 
its low-dimensional representation.
If the mapping function satisfies
\begin{align}
p(y\mid\bx)=p(y\mid \bv(\bx)) ,
\label{eq:sufficient-dim-red}
\end{align}
the obtained low-dimensional representation can be used
as the new input instead of the original input vector.
Finding the mapping function satisfying 
the condition~\eqref{eq:sufficient-dim-red} 
is known as \emph{sufficient dimension reduction} \citep{JASA:Li:1991}.
Let $\widetilde{\SMI}$ be SMI between $\bv(\bx)$ and $y$.
\citet{NeCo:Suzuki:2013} proved $\SMI\geq\widetilde{\SMI}$
and equality holds when the condition~\eqref{eq:sufficient-dim-red}
is satisfied. 
That is, maximizing SMI is finding \emph{sufficient} representation
for the output $y$.

Following the information-maximization principle \citep{Comp:Linsker:1988},
we maximize PU-SMI with respect to the mapping
to find low-dimensional representation
that maximally preserves dependency between input and output. 
  
\begin{algorithm}[t]
\caption{PU Representation Learning}
\label{alg:pudr}
	\begin{algorithmic}[1]
 	\REQUIRE $\{\bxp_i\}^\np_{i=1}$, and $\{\bxu_k\}^\nun_{k=1}$.
 	\STATE Initialize $\widehat{g}$ and $\widehat{\bv}$ 
	\REPEAT
		\STATE Update $\widehat{g}\leftarrow \widehat{g} 
			- \varepsilon_g \nabla_{g}\Jh_\PU(g\circ \widehat{\bv})\big|_{g=\widehat{g}}$
		\STATE Update $\widehat{\bv}\leftarrow \widehat{\bv} 			
			- \varepsilon_{\bv}\nabla_{\bv}\Jh_\PU(\widehat{g}\circ \bv)\big|_{\bv=\widehat{\bv}}$
	\UNTIL{stopping conditions meet}
	\RETURN $\widehat{g}$ and $\widehat{\bv}$. 
	\end{algorithmic}
\end{algorithm}

More specifically, since 
$\PUSMIh=-\thetap/\thetan \cdot \min_{w\in\calW}\;\Jh_\PU(w)-\thetap/(2\thetan)$,
we minimize $\Jh_\PU(w)$ with respect to $w$.
Furthermore, inspired by the alternative optimization algorithm
for the SMI-based dimension reduction method \citep{NeCo:Suzuki:2013},
we decompose $w$ into $g$ and $\bv$ such that $w=g\circ \bv$
and minimize $\Jh_\PU(g\circ \bv)$ by optimizing $g$ and $\bv$ alternatively,
where $g\colon\bbR^m\to\bbR$ and
``$\circ$'' denotes the function composition, i.e.,
$(g \circ \bv)(\bx)=g(\bv(\bx))$.
In this decomposition, $\bv$ can be regarded as a mapping function extracting
features from input pattern and $g$ a density ratio function $p(\bx\mid y=+1)/p(\bx)$.   
First, we approximate SMI by minimizing
Eq.~\eqref{eq:pusmi-loss} with respect to density ratio $g$
with current mapping $\widehat{\bv}$ fixed:
\begin{align*}
\gh=\argmin_{g}\; \Jh_\PU(g\circ\widehat{\bv}) .
\end{align*}
Then, we update mapping $\widehat{\bv}$ to increase the estimated PU-SMI
with current density ratio $\gh$ fixed:
\begin{align*}
\widehat{\bv}\leftarrow \widehat{\bv} 
  - \varepsilon\nabla_{\bv}\Jh_\PU(\gh\circ \bv),
\end{align*}
where $\varepsilon$ is the step size.
This process is repeated until convergence.
In practice, we may alternately optimize $g$ and $\bv$ as described in
Algorithm~\ref{alg:pudr} to simplify the implementation.\footnote{
We also tried to optimize $w$ and $\bv$ simultaneously.
That is, $J_\PU$ is minimized with respect to $g$ without decomposing $g$
into $w$ and $\bv$,
but it did not work well in our preliminary experiments.
}
We refer to our representation learning method for PU data as
\emph{positive-unlabeled representation learning (PURL)}.

Note again that, in the above optimization process,
unknown class-prior ratio ${\thetap}/{\thetan}$ does not need to be estimated in advance,
which is a significant advantage of the proposed method.

\section{Experiments}
\label{sec:experiments}
In this section, we experimentally investigate the behavior of the proposed PU-SMI 
estimator and evaluate 
the performance of the proposed representation learning method
on various benchmark datasets.

\subsection{Accuracy of PU-SMI Estimation}
\label{sec:exp-est}
First, we investigate the estimation accuracy of the proposed 
PU-SMI estimator on datasets obtained from
the \emph{LIBSVM} webpage \citep{LibSVM:Chang+etal:2011}.

As the model $w$,
we use the linear-in-parameter model with the Gaussian basis functions 
$\phi_\ell(\bx):=\exp(-\|\bx-\bx_\ell\|^2/(2\sigma^2))$ for $\ell=1,\ldots,b$,
where $\sigma>0$ is the bandwidth and $\{\bx_\ell\}^b_{\ell=1}$
are the centers of the Gaussian functions 
randomly sampled from $\{\bxu_k\}^\nun_{k=1}$.
The Gaussian bandwidth and the $\ell_2$-regularization parameter are
determined by five-fold cross-validation.
We vary the number of positive/unlabeled samples
from $10$ to $200$, with the number of unlabeled/positive samples fixed.
The class-prior was assumed to be known in this illustrative experiment
and set at $\thetap=0.5$.

\begin{figure*}[t]
	\centering	
	\subfigure[$\nun=400$]{%
		\includegraphics[clip, width=.48\columnwidth]{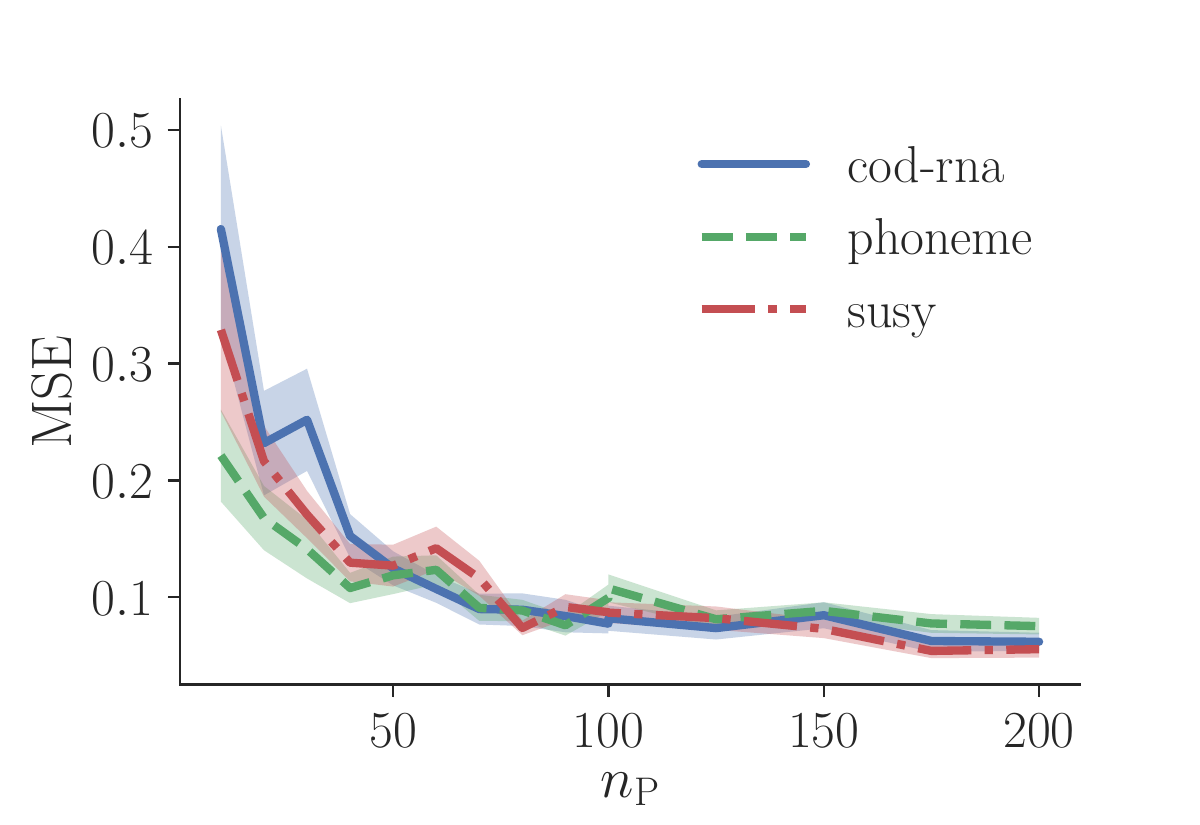}
	}%
    \subfigure[$\np=200$]{%
    	\includegraphics[clip, width=.48\columnwidth]{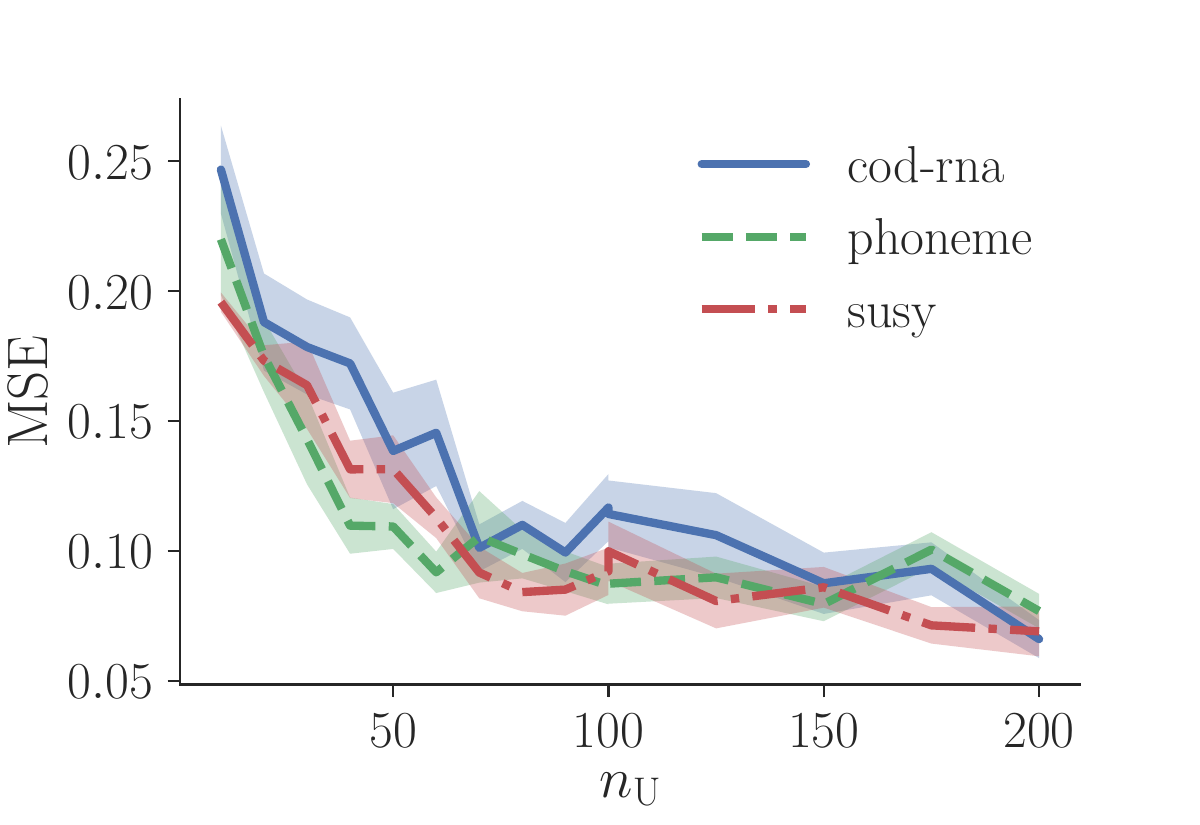}
    }%
	\caption{Average and standard error of the squared estimation error 
	of PU-SMI over $50$ trials.
	(a) $\np$ is increased while $\nun=400$ is fixed.	
	(b) $\nun$ is increased while $\np=200$ is fixed. 
	The results show that both positive and unlabeled samples 
	contribute to improving the estimation accuracy of SMI.}
	\label{fig:est-bench}
\end{figure*}

Figure~\ref{fig:est-bench} summarizes the average and standard error of the squared 
estimation error of PU-SMI over $50$ trials.\footnote{We compute the squared error 
between the estimated PU-SMI and the supervised SMI estimator \citep{BMCBio:Suzuki+etal:2009a}
with a sufficiently large number of positive and negative samples.
}
This shows that the mean squared error decreases 
both when the number of positive samples is increased
and the number of unlabeled samples is increased.
Therefore, both positive and unlabeled data contribute to
improving the estimation accuracy of SMI,
which well agrees with our theoretical analysis in Section~\ref{sec:theory}.

\subsection{Representation Learning}
\label{sec:rep-learn}
Next, we evaluate the performance of the proposed 
representation learning method, PURL.

\begin{figure*}[t]
	\centering
	\subfigure[Data and obtained subspaces]{%
 		\includegraphics[clip, width=.48\columnwidth]{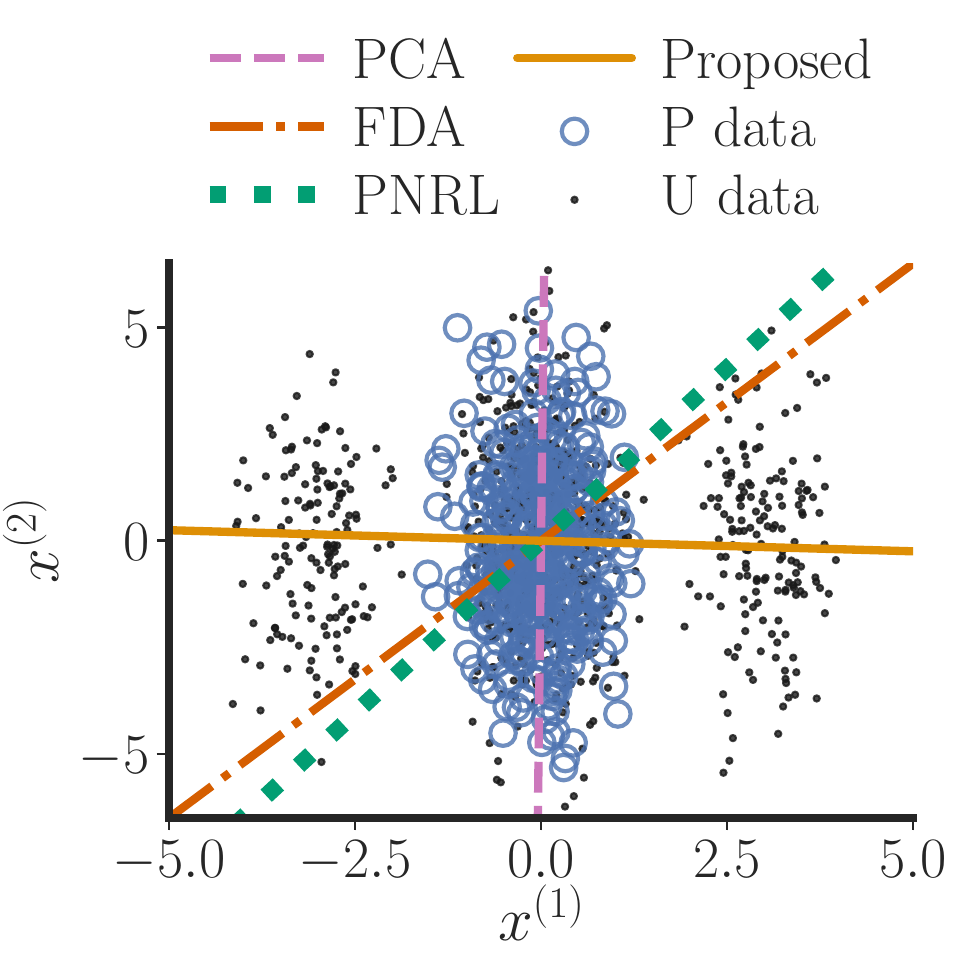}
		\label{fig:dr-data}
	}%
	\subfigure[Projected data with its labels]{%
		 \includegraphics[clip, width=.48\columnwidth]{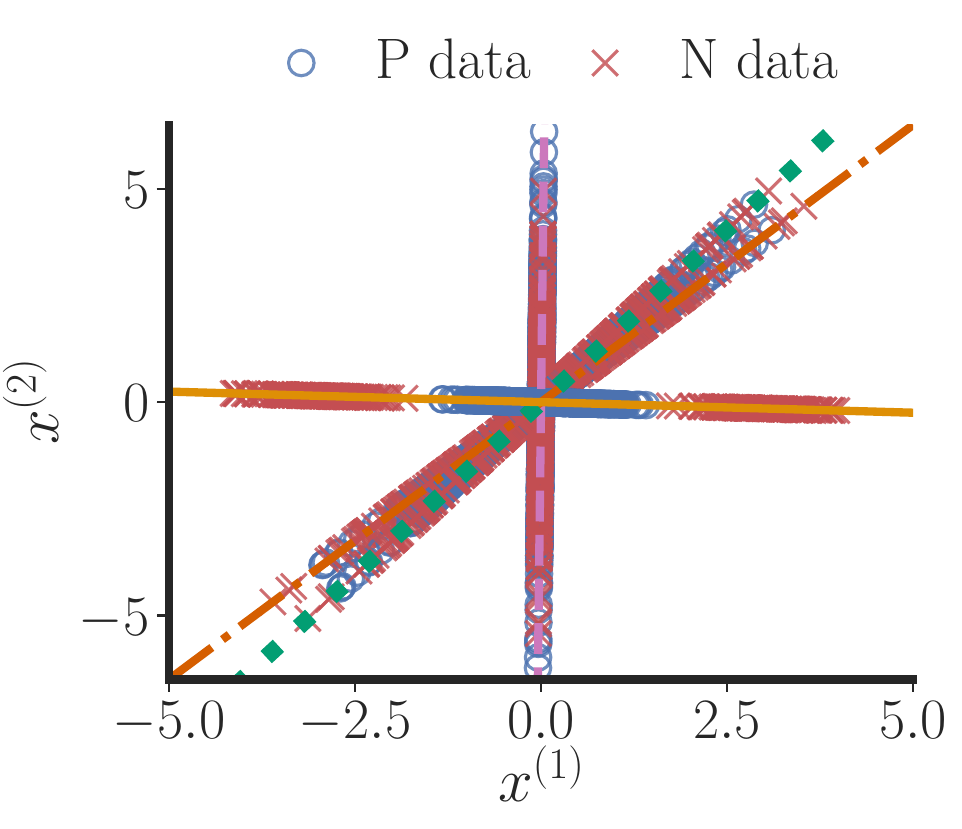}
		\label{fig:dr-proj-data}
	}%
	\caption{
	(a) Positive 
	and  
	unlabeled data. 
	The estimated subspaces obtained by 
	PCA, 
	FDA, 
	PNRL, 
	and our proposed method. 
	(b) Unlabeled data with the true labels projected onto 
	the subspaces obtained by PCA, FDA,  and our method, respectively.
	The results indicate that PCA, FDA, and PNRL smash underlying class structure,
	while the positive and negative labels are visibly separated
	in the subspace obtained by our method. 
	}
	\label{fig:dr-toy}
\end{figure*}

\paragraph{Illustration:}
We first illustrate how our proposed method works 
on an artificial data set.
We generate samples from the following densities:
\begin{align*}
p(\bx\mid y=+1)&=N\left(\bx;
\begin{pmatrix}
0 \\ 0
\end{pmatrix}, 
\begin{pmatrix}
0.25 & 0 \\ 0 & 4
\end{pmatrix}\right) , \\
p(\bx\mid y=-1)&=
\frac{1}{2}N\left(\bx;
\begin{pmatrix}
3 \\ 0
\end{pmatrix}, 
\begin{pmatrix}
0.25 & 0 \\ 0 & 4
\end{pmatrix}\right) 
+
\frac{1}{2}N\left(\bx;
\begin{pmatrix}
-3 \\ 0
\end{pmatrix}, 
\begin{pmatrix}
0.25 & 0 \\ 0 & 4
\end{pmatrix}\right) ,
\end{align*}
where $N(\bx;\bmu, \bSigma)$ is the normal density
with the mean vector $\bmu$ and the covariance matrix $\bSigma$. 
The class-prior is set at $\thetap=0.7$. 
From the densities, we draw $\np=400$ positive and 
$\nun=1000$ unlabeled samples. 
For comparison, we apply PCA, Fisher's discriminant analysis (FDA),
and PNRL\footnote{
The details of PNRL are described in Appendix~\ref{sec:pn-smi}. 
}  (the supervised counterpart of PURL) to the data.
As the label information for FDA and PNRL, U data is simply regarded as N data
even though U data is a mixture of P and N data.
Since PCA and FDA are linear transformations, 
we also use a linear transformation in PNRL and PURL
for this numerical illustration. 
Specifically, we use a two-layer perceptron for $w$.
The first fully-connected layer is used as linear transformation 
to obtain one-dimensional representation.
The rectified linear unit (ReLU) \citep{AISTATS:Glorot+etal:2011} 
is used for activation functions of the output of the first layer,
which can be seen as feature mapping functions
in the linear-in-parameter model.
The second layer is just a single connection that weighs the output of
the first layer.

We plot the subspaces obtained by PCA, FDA, PNRL, and our proposed method
in Figure~\ref{fig:dr-data}.
Since the data is distributed vertically,
the subspace obtained by PCA is almost parallel to the vertical axis (the dashed line).
FDA and PNRL return diagonal lines (the dashdot and dotted lines),
showing that regarding U data as N data is not an appropriate way.
On the other hand, the subspace obtained by our method 
is almost parallel to the horizontal axis (the solid line).
Figure~\ref{fig:dr-proj-data} plots projected labeled data onto those subspaces.
This shows that the labels of the data projected by PCA, FDA, and PNRL are hardly
distinguishable due to significant overlap,
which makes class-prior estimation very hard.
In contrast, we can easily separate the classes of 
samples projected by the proposed method,
which eases class-prior estimation.

\begin{table*}[tp] 
	\centering 
	\caption{
	Average absolute error (with standard error) between the estimated class-prior
	and the true value on benchmark datasets over $20$ trials.
	None means that the class-prior is estimated without dimension reduction methods,	  	 
	PCA is the principal component analysis,
	FDA is Fisher's discriminant analysis,
	and PNRL is the supervised counterpart of the proposed method.
	The class-prior is estimated by the method based on kernel mean embedding.
	The boldface denotes the best and comparable approaches in terms of the average
	absolute error according to the t-test at the significance level $5\%$.	
	}
	\label{tab:dr-prior-ret}	
	\resizebox{\textwidth}{!}{%
	\begin{tabular}{lccccccccc}
		\toprule 
		\multirow{2}{*}{Dataset} 		
		& \multirow{2}{*}{$\theta_\mathrm{P}$} 
		& \multirow{2}{*}{None} 		
		& \multicolumn{3}{c}{PCA} 		
		& \multirow{2}{*}{FDA}
		& \multirow{2}{*}{PNRL} 
		& \multirow{2}{*}{PURL} \\ 
		\cmidrule(lr){4-6} 
		& & & $\lfloor d/4 \rfloor$ & $\lfloor d/2 \rfloor$ 
		& $\lfloor 3d/4 \rfloor$ & 
		\\
		\midrule 
		\multirow{3}{*}{ijcnn1} 
		& $0.3$
		& $0.23$ ($0.02$) 
		& $0.26$ ($0.11$) 
		& $0.26$ ($0.11$) 
		& $0.28$ ($0.04$) 
		& $\mathbf{0.03}$ ($\mathbf{0.01}$) 
		& $0.26$ ($0.08$) 
		& $0.21$ ($0.07$) 
		\\
		& $0.5$
		& $0.18$ ($0.05$) 
		& $0.14$ ($0.09$) 
		& $0.14$ ($0.09$) 
		& $0.17$ ($0.06$) 
		& $\mathbf{0.04}$ ($\mathbf{0.01}$) 
		& $0.21$ ($0.08$) 
		& $0.19$ ($0.07$) 
		\\
		& $0.7$
		& $\mathbf{0.08}$ ($\mathbf{0.01}$) 
		& $0.11$ ($0.05$) 
		& $0.11$ ($0.05$) 
		& $0.10$ ($0.04$) 
		& $\mathbf{0.07}$ ($\mathbf{0.01}$) 
		& $0.11$ ($0.05$) 
		& $\mathbf{0.10}$ ($\mathbf{0.01}$) 
		\\
		\cmidrule(lr){1-9}
		\multirow{3}{*}{phishing} 
		& $0.3$
		& $\mathbf{0.02}$ ($\mathbf{0.00}$) 
		& $\mathbf{0.02}$ ($\mathbf{0.00}$) 
		& $\mathbf{0.02}$ ($\mathbf{0.00}$) 
		& $\mathbf{0.02}$ ($\mathbf{0.00}$) 
		& $0.04$ ($0.02$) 
		& $0.03$ ($0.01$) 
		& $\mathbf{0.02}$ ($\mathbf{0.00}$) 
		\\
		& $0.5$
		& $\mathbf{0.01}$ ($\mathbf{0.00}$) 
		& $\mathbf{0.01}$ ($\mathbf{0.00}$) 
		& $\mathbf{0.01}$ ($\mathbf{0.00}$) 
		& $\mathbf{0.01}$ ($\mathbf{0.00}$) 
		& $0.07$ ($0.03$) 
		& $0.04$ ($0.02$) 
		& $0.03$ ($0.02$) 
		\\
		& $0.7$
		& $\mathbf{0.02}$ ($\mathbf{0.00}$) 
		& $\mathbf{0.02}$ ($\mathbf{0.00}$) 
		& $\mathbf{0.02}$ ($\mathbf{0.00}$) 
		& $\mathbf{0.02}$ ($\mathbf{0.00}$) 
		& $0.11$ ($0.04$) 
		& $0.05$ ($0.03$) 
		& $\mathbf{0.02}$ ($\mathbf{0.00}$) 
		\\
		\cmidrule(lr){1-9}
		\multirow{3}{*}{mushrooms} 
		& $0.3$
		& $0.05$ ($0.01$) 
		& $0.05$ ($0.01$) 
		& $0.05$ ($0.01$) 
		& $0.05$ ($0.01$) 
		& $0.09$ ($0.03$) 
		& $\mathbf{0.03}$ ($\mathbf{0.00}$) 
		& $\mathbf{0.03}$ ($\mathbf{0.00}$) 
		\\
		& $0.5$
		& $\mathbf{0.05}$ ($\mathbf{0.01}$) 
		& $\mathbf{0.05}$ ($\mathbf{0.01}$) 
		& $\mathbf{0.05}$ ($\mathbf{0.01}$) 
		& $\mathbf{0.05}$ ($\mathbf{0.01}$) 
		& $0.16$ ($0.03$) 
		& $\mathbf{0.04}$ ($\mathbf{0.01}$) 
		& $\mathbf{0.04}$ ($\mathbf{0.00}$) 
		\\
		& $0.7$
		& $\mathbf{0.03}$ ($\mathbf{0.01}$) 
		& $\mathbf{0.03}$ ($\mathbf{0.00}$) 
		& $\mathbf{0.03}$ ($\mathbf{0.00}$) 
		& $\mathbf{0.03}$ ($\mathbf{0.01}$) 
		& $0.20$ ($0.06$) 
		& $\mathbf{0.03}$ ($\mathbf{0.00}$) 
		& $0.04$ ($0.03$) 
		\\
		\cmidrule(lr){1-9}
		\multirow{3}{*}{a9a} 
		& $0.3$
		& $0.11$ ($0.02$) 
		& $0.11$ ($0.02$) 
		& $0.11$ ($0.02$) 
		& $0.11$ ($0.02$) 
		& $\mathbf{0.05}$ ($\mathbf{0.01}$) 
		& $0.08$ ($0.03$) 
		& $\mathbf{0.04}$ ($\mathbf{0.00}$) 
		\\
		& $0.5$
		& $0.10$ ($0.02$) 
		& $0.10$ ($0.02$) 
		& $0.10$ ($0.02$) 
		& $0.10$ ($0.02$) 
		& $0.09$ ($0.04$) 
		& $0.09$ ($0.03$) 
		& $\mathbf{0.04}$ ($\mathbf{0.01}$) 
		\\
		& $0.7$
		& $0.08$ ($0.03$) 
		& $0.08$ ($0.03$) 
		& $0.08$ ($0.03$) 
		& $0.08$ ($0.03$) 
		& $0.18$ ($0.06$) 
		& $0.08$ ($0.03$) 
		& $\mathbf{0.04}$ ($\mathbf{0.01}$) 
		\\
		\cmidrule(lr){1-9}
		\multirow{3}{*}{MNIST} 
		& $0.3$
		& $0.09$ ($0.02$) 
		& $0.09$ ($0.02$) 
		& $0.09$ ($0.02$) 
		& $0.09$ ($0.02$) 
		& $0.27$ ($0.01$) 
		& $\mathbf{0.01}$ ($\mathbf{0.00}$) 
		& $0.05$ ($0.02$) 
		\\
		& $0.5$
		& $0.15$ ($0.11$) 
		& $0.15$ ($0.11$) 
		& $0.15$ ($0.11$) 
		& $0.15$ ($0.11$) 
		& $0.46$ ($0.01$) 
		& $\mathbf{0.03}$ ($\mathbf{0.00}$) 
		& $0.06$ ($0.03$) 
		\\
		& $0.7$
		& $0.60$ ($0.21$) 
		& $0.60$ ($0.21$) 
		& $0.60$ ($0.21$) 
		& $0.60$ ($0.21$) 
		& $0.65$ ($0.02$) 
		& $\mathbf{0.06}$ ($\mathbf{0.01}$) 
		& $\mathbf{0.07}$ ($\mathbf{0.01}$) 
		\\
		\cmidrule(lr){1-9}
		\multirow{3}{*}{F-MNIST} 
		& $0.3$
		& $\mathbf{0.02}$ ($\mathbf{0.00}$) 
		& $\mathbf{0.02}$ ($\mathbf{0.00}$) 
		& $\mathbf{0.02}$ ($\mathbf{0.00}$) 
		& $\mathbf{0.02}$ ($\mathbf{0.00}$) 
		& $0.25$ ($0.01$) 
		& $\mathbf{0.03}$ ($\mathbf{0.00}$) 
		& $\mathbf{0.03}$ ($\mathbf{0.00}$) 
		\\
		& $0.5$
		& $\mathbf{0.03}$ ($\mathbf{0.00}$) 
		& $\mathbf{0.03}$ ($\mathbf{0.00}$) 
		& $\mathbf{0.03}$ ($\mathbf{0.00}$) 
		& $\mathbf{0.03}$ ($\mathbf{0.00}$) 
		& $0.45$ ($0.01$) 
		& $\mathbf{0.02}$ ($\mathbf{0.00}$) 
		& $0.04$ ($0.03$) 
		\\
		& $0.7$
		& $\mathbf{0.03}$ ($\mathbf{0.00}$) 
		& $\mathbf{0.03}$ ($\mathbf{0.00}$) 
		& $\mathbf{0.03}$ ($\mathbf{0.00}$) 
		& $\mathbf{0.03}$ ($\mathbf{0.00}$) 
		& $0.66$ ($0.02$) 
		& $\mathbf{0.02}$ ($\mathbf{0.00}$) 
		& $0.07$ ($0.03$) 
		\\
		\cmidrule(lr){1-9}
		\multirow{3}{*}{20 News} 
		& $0.3$
		& $\mathbf{0.04}$ ($\mathbf{0.00}$) 
		& $\mathbf{0.04}$ ($\mathbf{0.01}$) 
		& $\mathbf{0.04}$ ($\mathbf{0.00}$) 
		& $\mathbf{0.04}$ ($\mathbf{0.00}$) 
		& $0.29$ ($0.00$) 
		& $0.29$ ($0.09$) 
		& $\mathbf{0.03}$ ($\mathbf{0.01}$) 
		\\
		& $0.5$
		& $0.08$ ($0.03$) 
		& $\mathbf{0.06}$ ($\mathbf{0.01}$) 
		& $\mathbf{0.07}$ ($\mathbf{0.01}$) 
		& $0.08$ ($0.03$) 
		& $0.49$ ($0.00$) 
		& $0.25$ ($0.07$) 
		& $\mathbf{0.05}$ ($\mathbf{0.01}$) 
		\\
		& $0.7$
		& $0.69$ ($0.00$) 
		& $0.69$ ($0.00$) 
		& $0.69$ ($0.00$) 
		& $0.69$ ($0.00$) 
		& $0.69$ ($0.00$) 
		& $\mathbf{0.13}$ ($\mathbf{0.03}$) 
		& $\mathbf{0.07}$ ($\mathbf{0.01}$) 
		\\		
		\bottomrule 
	\end{tabular}
	}
\end{table*}
\begin{table*}[tp] 
	\centering	
	\caption{
	Average misclassification rates (with standard error) 
	on benchmark datasets over $20$ trials.
	The boldface denotes the best and comparable approaches
	in terms of the average absolute error according to
	the t-test at the significance level $5\%$.
	}
	\label{tab:dr-clfy-ret}
	\resizebox{\textwidth}{!}{%
	\begin{tabular}{lccccccccc}
		\toprule 
		\multirow{2}{*}{Dataset} & \multirow{2}{*}{$\theta_\mathrm{P}$} 
		& \multirow{2}{*}{None} 
		& \multicolumn{3}{c}{PCA} 
		& \multirow{2}{*}{FDA} 
		& \multirow{2}{*}{PNRL} 
		& \multirow{2}{*}{PURL} \\ 
		\cmidrule(lr){4-6} 
		& & & $\lfloor d/4 \rfloor$ & $\lfloor d/2 \rfloor$ 
		& $\lfloor 3d/4 \rfloor$ & 
		\\
		\midrule 
		\multirow{3}{*}{ijcnn1} 
		& $0.3$
		& $25.32$ ($1.23$) 
		& $27.79$ ($2.05$) 
		& $27.79$ ($2.05$) 
		& $29.21$ ($1.30$) 
		& $\mathbf{7.00}$ ($\mathbf{0.59}$) 
		& $28.57$ ($2.52$) 
		& $25.92$ ($2.64$) 
		\\
		& $0.5$
		& $21.43$ ($1.22$) 
		& $17.88$ ($1.72$) 
		& $17.88$ ($1.72$) 
		& $20.75$ ($1.15$) 
		& $\mathbf{8.25}$ ($\mathbf{1.33}$) 
		& $26.52$ ($2.26$) 
		& $21.52$ ($1.69$) 
		\\
		& $0.7$
		& $\mathbf{12.07}$ ($\mathbf{0.53}$) 
		& $14.94$ ($1.02$) 
		& $14.94$ ($1.02$) 
		& $\mathbf{14.61}$ ($\mathbf{1.15}$) 
		& $\mathbf{11.23}$ ($\mathbf{1.34}$) 
		& $17.34$ ($1.58$) 
		& $\mathbf{13.70}$ ($\mathbf{1.04}$) 
		\\
		\cmidrule(lr){1-9}
		\multirow{3}{*}{phishing} 
		& $0.3$
		& $\mathbf{7.41}$ ($\mathbf{0.46}$) 
		& $\mathbf{7.46}$ ($\mathbf{0.48}$) 
		& $\mathbf{7.46}$ ($\mathbf{0.48}$) 
		& $\mathbf{7.57}$ ($\mathbf{0.46}$) 
		& $\mathbf{10.30}$ ($\mathbf{2.42}$) 
		& $11.09$ ($0.98$) 
		& $\mathbf{7.62}$ ($\mathbf{0.45}$) 
		\\
		& $0.5$
		& $\mathbf{12.85}$ ($\mathbf{2.11}$) 
		& $\mathbf{9.75}$ ($\mathbf{0.46}$) 
		& $\mathbf{9.75}$ ($\mathbf{0.46}$) 
		& $\mathbf{9.82}$ ($\mathbf{0.40}$) 
		& $24.43$ ($3.09$) 
		& $32.02$ ($3.05$) 
		& $\mathbf{10.05}$ ($\mathbf{0.46}$) 
		\\
		& $0.7$
		& $\mathbf{8.07}$ ($\mathbf{0.44}$) 
		& $\mathbf{8.85}$ ($\mathbf{1.08}$) 
		& $\mathbf{8.85}$ ($\mathbf{1.08}$) 
		& $\mathbf{7.63}$ ($\mathbf{0.37}$) 
		& $25.62$ ($1.40$) 
		& $29.04$ ($0.73$) 
		& $\mathbf{8.02}$ ($\mathbf{0.37}$) 
		\\
		\cmidrule(lr){1-9}
		\multirow{3}{*}{mushrooms} 
		& $0.3$
		& $0.73$ ($0.20$) 
		& $\mathbf{1.15}$ ($\mathbf{0.58}$) 
		& $0.57$ ($0.14$) 
		& $\mathbf{0.49}$ ($\mathbf{0.14}$) 
		& $1.52$ ($0.36$) 
		& $\mathbf{0.24}$ ($\mathbf{0.06}$) 
		& $\mathbf{0.43}$ ($\mathbf{0.09}$) 
		\\
		& $0.5$
		& $\mathbf{0.57}$ ($\mathbf{0.11}$) 
		& $\mathbf{0.57}$ ($\mathbf{0.11}$) 
		& $\mathbf{0.78}$ ($\mathbf{0.16}$) 
		& $\mathbf{0.57}$ ($\mathbf{0.11}$) 
		& $3.40$ ($0.47$) 
		& $\mathbf{1.10}$ ($\mathbf{0.24}$) 
		& $\mathbf{3.40}$ ($\mathbf{2.39}$) 
		\\
		& $0.7$
		& $\mathbf{1.42}$ ($\mathbf{0.28}$) 
		& $\mathbf{1.42}$ ($\mathbf{0.28}$) 
		& $\mathbf{1.50}$ ($\mathbf{0.27}$) 
		& $\mathbf{1.42}$ ($\mathbf{0.28}$) 
		& $6.38$ ($0.66$) 
		& $\mathbf{1.40}$ ($\mathbf{0.27}$) 
		& $\mathbf{1.61}$ ($\mathbf{0.48}$) 
		\\
		\cmidrule(lr){1-9}
		\multirow{3}{*}{a9a} 
		& $0.3$
		& $24.93$ ($1.19$) 
		& $26.49$ ($1.89$) 
		& $26.49$ ($1.89$) 
		& $26.20$ ($1.73$) 
		& $\mathbf{21.09}$ ($\mathbf{0.59}$) 
		& $26.31$ ($2.36$) 
		& $\mathbf{22.32}$ ($\mathbf{0.65}$) 
		\\
		& $0.5$
		& $30.35$ ($1.55$) 
		& $26.07$ ($1.01$) 
		& $26.07$ ($1.01$) 
		& $29.52$ ($1.81$) 
		& $\mathbf{22.70}$ ($\mathbf{0.77}$) 
		& $27.48$ ($1.47$) 
		& $\mathbf{23.70}$ ($\mathbf{0.67}$) 
		\\
		& $0.7$
		& $\mathbf{20.35}$ ($\mathbf{0.80}$) 
		& $\mathbf{20.54}$ ($\mathbf{0.61}$) 
		& $\mathbf{20.54}$ ($\mathbf{0.61}$) 
		& $\mathbf{19.94}$ ($\mathbf{0.78}$) 
		& $\mathbf{19.70}$ ($\mathbf{0.97}$) 
		& $\mathbf{20.59}$ ($\mathbf{0.60}$) 
		& $\mathbf{19.39}$ ($\mathbf{0.66}$) 
		\\
		\cmidrule(lr){1-9}
		\multirow{3}{*}{MNIST} 
		& $0.3$
		& $24.58$ ($2.82$) 
		& $17.99$ ($1.44$) 
		& $17.99$ ($1.44$) 
		& $22.18$ ($2.75$) 
		& $20.92$ ($0.74$) 
		& $\mathbf{12.74}$ ($\mathbf{0.63}$) 
		& $\mathbf{11.76}$ ($\mathbf{0.78}$) 
		\\
		& $0.5$
		& $23.00$ ($1.60$) 
		& $22.35$ ($1.10$) 
		& $22.35$ ($1.10$) 
		& $23.55$ ($1.80$) 
		& $42.10$ ($1.85$) 
		& $\mathbf{15.35}$ ($\mathbf{0.75}$) 
		& $\mathbf{18.18}$ ($\mathbf{2.43}$) 
		\\
		& $0.7$
		& $53.34$ ($3.78$) 
		& $52.19$ ($4.41$) 
		& $54.42$ ($3.99$) 
		& $53.39$ ($3.74$) 
		& $60.86$ ($1.25$) 
		& $\mathbf{16.38}$ ($\mathbf{0.84}$) 
		& $\mathbf{18.64}$ ($\mathbf{2.83}$) 
		\\
		\cmidrule(lr){1-9}
		\multirow{3}{*}{F-MNIST} 
		& $0.3$
		& $\mathbf{14.88}$ ($\mathbf{1.30}$) 
		& $\mathbf{18.02}$ ($\mathbf{2.86}$) 
		& $\mathbf{18.02}$ ($\mathbf{2.86}$) 
		& $\mathbf{15.12}$ ($\mathbf{1.18}$) 
		& $19.24$ ($0.91$) 
		& $\mathbf{14.54}$ ($\mathbf{1.14}$) 
		& $\mathbf{13.54}$ ($\mathbf{0.75}$) 
		\\
		& $0.5$
		& $\mathbf{13.40}$ ($\mathbf{0.69}$) 
		& $\mathbf{12.05}$ ($\mathbf{0.96}$) 
		& $\mathbf{12.05}$ ($\mathbf{0.96}$) 
		& $\mathbf{13.22}$ ($\mathbf{0.62}$) 
		& $37.73$ ($1.56$) 
		& $\mathbf{12.15}$ ($\mathbf{0.48}$) 
		& $\mathbf{14.10}$ ($\mathbf{1.16}$) 
		\\
		& $0.7$
		& $\mathbf{9.94}$ ($\mathbf{1.30}$) 
		& $\mathbf{8.89}$ ($\mathbf{0.84}$) 
		& $\mathbf{8.89}$ ($\mathbf{0.84}$) 
		& $\mathbf{8.54}$ ($\mathbf{0.84}$) 
		& $55.65$ ($2.14$) 
		& $\mathbf{8.65}$ ($\mathbf{0.83}$) 
		& $\mathbf{9.29}$ ($\mathbf{0.47}$) 
		\\
		\cmidrule(lr){1-9}
		\multirow{3}{*}{20 News} 
		& $0.3$
		& $38.89$ ($3.00$) 
		& $40.30$ ($3.64$) 
		& $42.48$ ($3.54$) 
		& $38.70$ ($3.81$) 
		& $\mathbf{18.66}$ ($\mathbf{0.47}$) 
		& $66.62$ ($1.59$) 
		& $36.31$ ($4.13$) 
		\\
		& $0.5$
		& $44.48$ ($1.82$) 
		& $43.85$ ($2.03$) 
		& $46.67$ ($1.15$) 
		& $47.77$ ($0.87$) 
		& $\mathbf{34.73}$ ($\mathbf{0.80}$) 
		& $50.00$ ($0.00$) 
		& $45.88$ ($1.64$) 
		\\
		& $0.7$
		& $50.69$ ($0.95$) 
		& $53.61$ ($0.73$) 
		& $51.77$ ($1.05$) 
		& $50.36$ ($0.83$) 
		& $50.69$ ($0.95$) 
		& $\mathbf{30.61}$ ($\mathbf{0.59}$) 
		& $\mathbf{29.85}$ ($\mathbf{0.13}$) 
		\\		
		\bottomrule 
	\end{tabular}	
	}%
\end{table*}

\paragraph{Benchmark Data:}
Next we apply the PURL method to benchmark datasets.
To obtain low-dimensional representation, 
we set $m=20$ and use a fully-connected neural network 
with four layers ($d$-$60$-$20$-$1$; $\bv$ is $d$-$60$-$20$ and $g$ is $20$-$1$.)
for $w$ except text classification dataset.
For text classification datasets, we use another 
fully-connected neural network with four layers
($d$-$30$-$10$-$1$) for $w$, i.e., $m=10$. 
ReLU is used as activation functions for hidden layers, 
and \emph{batch normalization} \citep{ICML:Ioffe+Szegedy:2015}
is applied to all hidden layers.
Stochastic gradient descent is used for optimization with learning rate $0.001$. 
Also, weight decay with $0.0005$ and gradient noise with $0.01$ are applied.
We iteratively update $w$ with four mini-batches and $\bv$ with one mini-batch.

We compare the accuracy of class-prior estimation with and without dimension reduction.
For comparison, we also consider PCA, FDA, and PNRL.
For PCA, we vary the numbers of components as follows: $\lfloor d/4 \rfloor$, 
$\lfloor d/2 \rfloor$, and $\lfloor 3d/4 \rfloor$,
where $\lfloor \cdot \rfloor$ is the floor function.
For FDA, the reduced dimension is $1$ due to the property of FDA
\citep{BOOK:Hastie+etal:2009} in which the reduced dimension becomes
the minimum of $m$ or $(\text{the number of classes}-1)$. 
The neural network for PNRL is the same as the one for the proposed method.

As a class-prior estimation method, we use the method based on
the \emph{kernel mean embedding} (KM) method proposed by \citet{ICML:Ramaswamy+etal:2016}.
With the estimated class-prior, we then train a fully-connected neural network 
with five layers ($m$-$300$-$300$-$300$-$1$).
ReLU is used as activation functions for hidden layers, 
and batch normalization is applied to all hidden layers.
Except for text classification datasets, we train the neural networks
by Adam \citep{ICLR:Kingma+Ba:2015} until $200$ epochs.
For text classification datasets, we use AdaGrad \citep{JMLR:Duchi+etal:2011}
and set the number of epochs to $300$.
For non-negative PU learning \citep{NIPS:Kiryo+etal:2017},
we use the sigmoid loss function and set $\beta$ and $\gamma$ in the paper
to $0$ and $1$, respectively.

We use the ijcnn$1$, phishing, mushrooms, and a$9$a 
datasets taken from the LIBSVM webpage \citep{LibSVM:Chang+etal:2011}.
Also, we use the MNIST \citep{IEEE:Lecun:1998},
Fashion-MNIST (F-MNIST) \citep{FMNIST:Xiao+etal:2017},
and $20$ Newsgroups \citep{ICML:Lang:1995} datasets.
For the MNIST and F-MNIST datasets, we divide the
whole classes into $2$ groups to make binary classification tasks.
For the $20$ Newsgroups dataset, we use
the ``com'' topic as the positive class
and the ``sci'' topic as the negative class,\footnote{
See \url{http://qwone.com/~jason/20Newsgroups/} for 
the details of topics.
} 
and make $2000$-dimensional \emph{tf-idf} vector.  
From the datasets, we draw $\np=1000$ positive and $\nun=2000$ unlabeled samples.
For validation, we use $\np=50$ and $\nun=200$ samples.

Table~\ref{tab:dr-prior-ret} lists the average absolute error 
between the estimated class-prior and the true value. 
Overall, our proposed dimension reduction method tends to
outperform other methods,
meaning that our method provides useful low-dimensional representation.
Except for the ijcnn$1$ dataset,
the error of FDA tends to be larger than the other methods,
implying that regarding U data as N data does not help in class-prior estimation.   
For the mushrooms and a$9$a datasets, 
applying the unsupervised dimension reduction method, 
PCA, does not improve the estimation accuracy,
while our method reduces the error of class-prior estimation.
In particular, for the $20$ Newsgroups dataset,
the existing approaches (PCA, FDA, and PNRL) perform poorly.
In contrast, applying our method significantly reduces 
the error of class-prior estimation.
	
Then, we summarize the average misclassification rates
in Table~\ref{tab:dr-clfy-ret}.
Since the accuracy of class-prior estimation is improved
on the mushrooms and a$9$a datasets,
the classification accuracy is also improved. 
In particular, the classification results on 
the $20$~Newsgroups dataset with $\thetap=0.7$ are improved substantially.
Overall, our proposed method tends to
give the lower or comparable misclassification rates compared 
with the other methods.

\section{Conclusions}
\label{sec:conclusion}
In this paper, we proposed an information-theoretic 
representation learning method from positive and unlabeled (PU) data. 
Our method is based on the information maximization principle,
and find low-dimensional representation maximally 
preserving a squared-loss variant of mutual information (SMI) 
between inputs and labels.
Unlike the existing PU learning methods,
since our representation learning method can be executed 
without knowing an estimate of the class-prior in advance,
our method can also be used as preprocessing
for the class-prior estimation method.
Through numerical experiments, we demonstrated 
the effectiveness of our method.

\subsection*{Acknowledgements}
TS was supported by KAKENHI $15$J$09111$.
GN was supported by the JST CREST JPMJCR$1403$.
MS was supported by KAKENHI $17$H$01760$. 
We thank Ikko Yamane, Ryuichi Kiryo, and Takeshi Teshima
for their comments.

\bibliographystyle{plainnat-reversed}
\bibliography{pu_refs}

\newpage \newpage
\appendix
\onecolumn

\appendix
\section{Proof of Theorem~\ref{theorem:PUSMI=SMI}}
\label{app:theorem:PUSMI=SMI}
\begin{proof}
Let us express SMI in Eq.~\eqref{eq:smi-def} as 
\begin{align}
  \SMI&=\frac{\thetap}{2}\int\Big(
	\frac{p(\bx\mid y=+1)}{p(\bx)}-1\Big)^2 p(\bx) \dbx 
	+\frac{\thetan}{2}\int\Big(\frac{p(\bx\mid y=-1)}{p(\bx)}-1\Big)^2
	p(\bx) \dbx	.
	\label{eq:smi-split}
\end{align}
From the marginal density $p(\bx)$, we have
\begin{align*}
  \thetan\frac{p(\bx\mid y=-1)}{p(\bx)}&=
                                         1-\thetap\frac{p(\bx\mid y=+1)}{p(\bx)} , \notag \\
  \thetan\Big(\frac{p(\bx\mid y=-1)}{p(\bx)}-1\Big) 
                                       &=\thetap\Big(1-\frac{p(\bx\mid y=+1)}{p(\bx)}\Big) , \notag \\
  \Big(\frac{p(\bx\mid y=-1)}{p(\bx)}-1\Big)^2
                                       &=\frac{\theta_\rP^2}{\theta_\rN^2}
                                         \Big(\frac{p(\bx\mid y=+1)}{p(\bx)}-1\Big)^2 ,
\end{align*}
where the equality between the first and second equations  
can be confirmed by using $\thetap+\thetan=1$.
Plugging the last equation into the second term 
of Eq.~\eqref{eq:smi-split}, we then obtain an expression of SMI
only with positive and unlabeled data (PU-SMI) as
\begin{align*}
  \SMI\!=\frac{\thetap}{2\thetan}\!
	\int\!\!\Big(\frac{p(\bx\mid y=+1)}{p(\bx)}-1\Big)^2 \! p(\bx)\dbx 
	=:\PUSMI . 
\end{align*}
\end{proof}

\section{Proof of Theorem~\ref{theorem:PUSMI-lowerbound}}
\label{app:theorem:PUSMI-lowerbound}
\begin{proof}
Let 
\begin{align*}
s(\bx):=\frac{p(\bx\mid y=+1)}{p(\bx)}
\end{align*}
be the density ratio.
Then, PU-SMI can be expressed as
\begin{align*}
\PUSMI&=\frac{\thetap}{2\thetan}\int (s(\bx)-1)^2p(\bx)\dbx \\
&=\frac{\thetap}{\thetan}\Big( \frac{1}{2}\int s^2(\bx)p(\bx)\dbx
	- \frac{1}{2}\Big) ,
\end{align*}
where $s(\bx)p(\bx)=p(\bx\mid y=+1)$ is used.
Based on the \emph{Fenchel inequality} \citep{BOOK:Boyd+Vandenberghe:2004},
for any function $f(\bx)$ in a function class $\calF$, we have 
\begin{align*}
\frac{1}{2}s^2(\bx) \geq f(\bx)s(\bx) - \frac{1}{2} f^2(\bx)  .
\end{align*}
Then, we obtain the lower bound of the PU-SMI by
\begin{align*}
\PUSMI
&\geq	\frac{\thetap}{\thetan}\Big(\int f(\bx)p(\bx\mid y=+1)\dbx 
	-\frac{1}{2}\int f^2(\bx)p(\bx)\dbx
	-\frac{1}{2}\Big)  \\
&=\frac{\thetap}{\thetan}
\Big(-J_\PU(f)-\frac{1}{2}\Big) ,
\end{align*}
where
\begin{align*}
  J_\PU(f):=\frac{1}{2}\int f^2(\bx)p(\bx)\dbx
  	-\int f(\bx)p(\bx\mid
  y=+1)\dbx .
\end{align*}
Thus, from the \emph{Fenchel duality} \citep{CRM:Keziou:2003,ISIT:Nguyen::2007},
we have
\begin{align*}
\PUSMI
&=\sup_{f\in\calF}\; \frac{\thetap}{\thetan}
\Big(-J_\PU(f)-\frac{1}{2}\Big) ,
\end{align*}
where equality in the supremum is attained when 
$f(\bx)=s(\bx)=p(\bx\mid y=+1)/p(\bx)$ and $s\in\calF$.
\end{proof}

\section{Proof of Theorem~\ref{thm:rate}}
\label{app:proof-rate}
The idea of the proof is to view the approximated squared error
as perturbed optimization of expected one.
In the analysis, we focus on the linear-in-parameter model
$w(\bx)=\sum^b_{\ell=1}\beta_\ell\phi_\ell(\bx)=\bbeta^\top\bphi(\bx)$. 
We assume that $0\leq\phi_\ell\leq 1$ for all $\ell=1,\ldots,b$ and $\bx\in\bbR^d$,  
and $\bHhu$ and $\bH^\rU$ are positive definite matrices. 
Recall 
\begin{align*}
\bbeta^\ast_{}&=\argmin_{\bbeta\in\bbR^b}\; J_\PU(\bbeta) ,
\end{align*}
where
\begin{align*}
J_\PU (\bbeta)&=\frac{1}{2}\bbeta^\top\bH^\rU\bbeta^\top
	-\bbeta^\top\bh^\rP , \\
\bH^\rU&=\int\bphi(\bx)\bphi(\bx)^\top p(\bx)\dbx, \\
\bh^\rP&=\int\bphi(\bx)p(\bx\mid y=+1)\dbx .
\end{align*}
Similarly,
\begin{align*}	
\bbetah&=\argmin_{\bbeta\in\bbR^b}\; \Jh_\PU(\bbeta) ,
\end{align*} 
where
\begin{align*}
\Jh_\PU (\bbeta)&=\frac{1}{2}\bbeta^\top\bHhu\bbeta^\top
	-\bbeta^\top\bhhp ,	\\
\bHhu&=\frac{1}{\nun}\sum_{k=1}^\nun \bphi(\bxu_k)\bphi(\bxu_k)^\top , \\
\bhhp&=\frac{1}{\np}\sum_{i=1}^\np \bphi(\bxp_i) .
\end{align*}

Firstly, we have the following lemma:
\begin{lemma}
\label{lem:sec-growth-cond}
Let $\epsilon$ be the smallest eigenvalue of $\bH^\rU$. 
We have 
\begin{align*}
J_\PU(\bbeta)\geq 
	J_\PU(\bbeta^\ast) + \epsilon\|\bbeta-\bbeta^\ast\|_2^2 .	
\end{align*}
That is, $J_\PU$ satisfies the second order growth condition 
\citep{SIAM:Bonnans+Shapiro:1998}.
\end{lemma}
\begin{proof}
Since $\bH^\rU$ is positive definite, $J_\PU(\bbeta)$ is strongly convex with
parameter at least $\epsilon$.
Then, we have
\begin{align*}
J_\PU(\bbeta) &\geq J_\PU(\bbeta^\ast) 
	+ \nabla J_\PU(\bbeta^\ast)^\top
	(\bbeta - \bbeta^\ast) + \epsilon\|\bbeta - \bbeta^\ast\|_2^2 \\
&= J_\PU(\bbeta^\ast) + \epsilon\|\bbeta - \bbeta^\ast\|_2^2 ,
\end{align*}
where the optimality condition 
$\nabla J_\PU(\bbeta^\ast)=\bzero$ is used.
\end{proof}
Let us define a set of perturbation parameters as
\begin{align*}
\calU:=\{\bU^\rU,\; \bu^\rP  \mid 
	\bU^\rU\in\bbS^b,\; \bu^\rP\in\bbR^b\} ,
\end{align*}
where $\bbS^b$ is the set of symetric $b\times b$ matrices.   
With these perturbation parameters, we express $\bHhu$ and $\bhhp$
as $\bU^\rU=\bHhu-\bH^\rU$ and $\bu^\rP=\bhhp-\bh^\rP$, respectively.
Let $\bu\in\calU$. Our perturbed objective function and the solution are given by
\begin{align*}
J_\PU(\bbeta, \bu)&:=\frac{1}{2}\bbeta^\top(\bH^\rU + \bU^\rU)\bbeta
	-\bbeta^\top(\bh^\rP + \bu^\rP) , \\ 
\bbeta(\bu)&:=\argmin_{\bbeta\in\bbR^b}\; J_\PU(\bbeta,\bu) . 
\end{align*}
Apparently, $J_\PU(\bbeta)=J_\PU(\bbeta, \bzero)$.
Also, $\Jh_\PU(\bbeta)=J_\PU(\bbeta,\bu)$ and $\bbetah=\bbeta(\bu)$ for $\bu\neq\bzero$.
We then have the following Lemma:
\begin{lemma}
\label{lem:lip-mod}
$J_\PU(\cdot, \bu)-J_\PU(\cdot)$ is Lipschitz continuous modulus
$\omega(\bu)=\calO(\|\bU^\rU\|_\mathrm{Fro} + \|\bu^\rP\|_2)$,
where $\|\cdot\|_\mathrm{Fro}$ is the Frobenius norm. 
\end{lemma}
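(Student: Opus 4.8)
The plan is to compute the difference $J_\PU(\cdot,\bu)-J_\PU(\cdot)$ explicitly and observe that it is a simple quadratic-plus-linear function of $\bbeta$ whose coefficients are controlled by the perturbation $\bu=\{\bU^\rU,\bu^\rP\}$. Subtracting the two definitions, all terms involving $\bH^\rU$ and $\bh^\rP$ cancel and we are left with
\begin{align*}
\Delta(\bbeta):=J_\PU(\bbeta,\bu)-J_\PU(\bbeta)=\tfrac12\bbeta^\top\bU^\rU\bbeta-\bbeta^\top\bu^\rP .
\end{align*}
First I would fix the region: by the standing assumption that parameters are regularized so that $\|\bbeta\|_2\le M$, a sufficiently small neighborhood of $\bbeta^\ast$ is contained in the ball $\calB_M:=\{\bbeta:\|\bbeta\|_2\le M\}$, so it suffices to prove a uniform Lipschitz bound for $\Delta$ on $\calB_M$.

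Next, for any $\bbeta,\bbeta'\in\calB_M$ I would use the symmetry of $\bU^\rU$ together with the elementary identity $\bbeta^\top\bU^\rU\bbeta-\bbeta'^\top\bU^\rU\bbeta'=(\bbeta-\bbeta')^\top\bU^\rU(\bbeta+\bbeta')$ to write
\begin{align*}
|\Delta(\bbeta)-\Delta(\bbeta')|
=\left|\tfrac12(\bbeta-\bbeta')^\top\bU^\rU(\bbeta+\bbeta')-(\bbeta-\bbeta')^\top\bu^\rP\right|
\le\|\bbeta-\bbeta'\|_2\left(\tfrac12\|\bU^\rU\|_{\mathrm{op}}\,\|\bbeta+\bbeta'\|_2+\|\bu^\rP\|_2\right),
\end{align*}
where the last step is Cauchy--Schwarz plus the operator-norm inequality $\|\bU^\rU\bv\|_2\le\|\bU^\rU\|_{\mathrm{op}}\|\bv\|_2$. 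Then I would use $\|\bbeta+\bbeta'\|_2\le2M$ on $\calB_M$ and the standard bound $\|\bU^\rU\|_{\mathrm{op}}\le\|\bU^\rU\|_{\mathrm{Fro}}$ to conclude
\begin{align*}
|\Delta(\bbeta)-\Delta(\bbeta')|\le\bigl(M\|\bU^\rU\|_{\mathrm{Fro}}+\|\bu^\rP\|_2\bigr)\|\bbeta-\bbeta'\|_2 ,
\end{align*}
i.e.\ $\Delta=J_\PU(\cdot,\bu)-J_\PU(\cdot)$ is Lipschitz continuous with modulus $\omega(\bu)=M\|\bU^\rU\|_{\mathrm{Fro}}+\|\bu^\rP\|_2=\calO(\|\bU^\rU\|_{\mathrm{Fro}}+\|\bu^\rP\|_2)$, which is exactly the claim.

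This is essentially a direct computation, so I do not anticipate a real obstacle; the only point requiring care is that the Lipschitz constant must be \emph{uniform} over the region in play, which is why we restrict to the bounded set $\calB_M$ (equivalently, to a bounded neighborhood of $\bbeta^\ast$): a pure quadratic form is not globally Lipschitz, and the factor $\|\bbeta+\bbeta'\|_2\le 2M$ is precisely what converts the quadratic term into a contribution linear in $\bbeta-\bbeta'$. I would also flag that passing from $\|\cdot\|_{\mathrm{op}}$ to $\|\cdot\|_{\mathrm{Fro}}$ is what lets us phrase the modulus in the Frobenius norm as stated, the choice made consistently in the perturbation-analysis argument that follows.
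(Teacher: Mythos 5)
Your proposal is correct and follows essentially the same route as the paper: both reduce the claim to the explicit form $\tfrac12\bbeta^\top\bU^\rU\bbeta-\bbeta^\top\bu^\rP$ and bound its variation uniformly on a bounded neighborhood of $\bbeta^\ast$, the paper via a gradient bound $\|\bU^\rU\bbeta-\bu^\rP\|_2\le(\delta+M)\|\bU^\rU\|_\mathrm{Fro}+\|\bu^\rP\|_2$ and you via the equivalent two-point identity with Cauchy--Schwarz. The only cosmetic difference is that the paper works on a $\delta$-ball around $\bbeta^\ast$ (giving $\|\bbeta\|_2\le\delta+M$) rather than on the full ball $\|\bbeta\|_2\le M$; either choice yields the stated modulus $\calO(\|\bU^\rU\|_\mathrm{Fro}+\|\bu^\rP\|_2)$.
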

\begin{proof}
Firstly, we have
\begin{align*}
J_\PU(\bbeta,\bu)-J_\PU(\bbeta)
	=\frac{1}{2}\bbeta^\top\bU^\rU\bbeta
	- \bbeta^\top\bu^\rP .
\end{align*}
The partial gradient is given by
\begin{align*}
\frac{\partial}{\partial \bbeta}(J_\PU(\bbeta,\bu)-J_\PU(\bbeta))
	=\bU^\rU\bbeta - \bu^\rP .
\end{align*}
Let us define the $\delta$-ball of $\bbeta^\ast$ as
$\calB_\delta(\bbeta^\ast):=\{\bbeta\mid \|\bbeta-\bbeta^\ast\|_2\leq\delta\}$,
and $M=\|\bbeta^\ast\|_2$.
For any $\bbeta\in\calB_\delta(\bbeta^\ast)$,
we can easily show
\begin{align*}
\|\bbeta\|_2 \leq \|\bbeta-\bbeta^\ast\|_2 + \|\bbeta^\ast\|_2 \leq \delta + M ,
\end{align*}
where we first used the triangle inequality and 
then $\|\bbeta-\bbeta^\ast\|_2^{}\leq\delta$ and $M=\|\bbeta^\ast\|_2^{}$.
Thus, 
\begin{align*}
&\Big\|\frac{\partial}{\partial \bbeta}
	(J_\PU(\bbeta,\bu)-J_\PU(\bbeta))\Big\|_2 
	\leq (\delta + M)\|\bU^\rU\|_\mathrm{Fro}
	+ \|\bu^\rP\|_2 .
\end{align*}
This means that $J_\PU(\cdot, \bu)-J_\PU(\cdot)$ 
is Lipschitz continuous
on $\calB_\delta(\bbeta^\ast)$ with a Lipschitz constant of order
$\calO(\|\bU^\rU\|_\mathrm{Fro}+\|\bu^\rP\|_2)$.
\end{proof}
Finally, we prove Theorem~\ref{thm:rate}.
\begin{proof}
According to the \emph{central limit theorem}, we have 
\begin{align*}
\|\bU^\rU\|_\mathrm{Fro}=\calO_p(1/\sqrt{\nun}), ~~~
\|\bu^\rP\|_2=\calO_p(1/\sqrt{\np})
\end{align*}
as $\np,\nun\to\infty$.
Since we proved that $J_\PU$ satisfies the second order growth condition 
(Lemma~\ref{lem:sec-growth-cond})
and $J_\PU(\cdot,\bu)-J_\PU(\cdot)$ is Lipschitz continuous modulus $\omega(\bu)$
(Lemma~\ref{lem:lip-mod}), we can use Proposition $6.1$ in \citet{SIAM:Bonnans+Shapiro:1998}
and have the first half of Theorem~\ref{thm:rate}:
\begin{align*}
\|\bbetah - \bbeta^\ast\|_2
&\leq \epsilon^{-1}\omega(\bu) \\
&=\calO(\|\bU^\rU\|_\mathrm{Fro}+\|\bu^\rP\|_2) \\
&=\calO_p(1/\sqrt{\np}+1/\sqrt{\nun}) .
\end{align*}

Next, we prove the latter half of Theorem~\ref{thm:rate}. 
For the squared errors, we have
\begin{align*}
\big|\Jh_\PU(\bbetah) - J_\PU(\bbeta^\ast)\big| 
&\leq \big|\Jh_\PU(\bbetah) - \Jh_\PU(\bbeta^\ast)\big| 
	+ \big|\Jh_\PU(\bbeta^\ast) - J_\PU(\bbeta^\ast)\big| .
\end{align*}
Here, we have
\begin{align*}
\Jh_\PU(\bbetah) - \Jh_\PU(\bbeta^\ast)
&=\frac{1}{2}(\bbetah + \bbeta^\ast)^\top\bHhu(\bbetah - \bbeta^\ast) 
	-(\bbetah - \bbeta^\ast)^\top\bhhp , \\
\Jh_\PU(\bbeta^\ast) - J_\PU(\bbeta^\ast)
&=\frac{1}{2}\bbeta^{\ast\top}\bU^\rU\bbeta^\ast - \bu^\rP\bbeta^\ast 
\end{align*}
Since $0\leq\phi_\ell(\bx)\leq 1$ and $M=\|\bbeta^\ast\|_2$,
it leads to 
\begin{align*}
|\Jh_\PU(\bbetah) - J_\PU(\bbeta^\ast)| 
&\leq |\Jh_\PU(\bbetah) - \Jh_\PU(\bbeta^\ast)|
	+ |\Jh_\PU(\bbeta^\ast) - J_\PU(\bbeta^\ast)| \\
&\leq \calO_p(\|\bbetah - \bbeta^\ast\|_2)
	+ \calO_p(\|\bU^\rU\|_\mathrm{Fro} + \|\bu^\rP\|_2) \\
&=\calO_p(1/\sqrt{\np} + 1/\sqrt{\nun}) .	
\end{align*}
Recall 
\begin{align*}
\PUSMI^\ast&=\frac{\thetap}{\thetan}
	\Big(-J_\PU(\bbeta^\ast)-\frac{1}{2}\Big) , \\
\widehat{\PUSMI}&=\frac{\thetap}{\thetan}
	\Big(-\Jh_\PU(\bbetah)-\frac{1}{2}\Big) .
\end{align*}
We thus have
\begin{align*}
|\PUSMI^\ast - \PUSMIh|
&=\frac{\thetap}{\thetan} |\Jh_\PU(\bbetah) - J_\PU(\bbeta^\ast)| \\ 
&=\calO_p(1/\sqrt{\np} + 1/\sqrt{\nun}) .
\end{align*}
This concludes the theorem.
\end{proof}

\section{Effect of Dimension Reduction}
\label{sec:effect-dim-red}
In this section, we illustrate the effect of dimension reduction
and how the number of samples affects class-prior estimation.

We use the artificial dataset used in Section~\ref{sec:rep-learn}
and vary both $n_\rP^{}$ and $n_\rU^{}$ from $500$ to $5{,}000$. 
We set the true class-prior $\theta_\rP^{}$ as $0.5$.
The class-prior is estimated by the method based on 
kernel mean embedding (KM) \citep{ICML:Ramaswamy+etal:2016}.
To evaluate the performance with and without dimension reduction,
we use the one-dimensional samples obtained by $\bb^\top\bx$,
where $\bb=(1, 0)^\top$, and the original two-dimensional samples.

\begin{figure}[t]
	\centering
	\subfigure[Mean absolute error (with its standard error) 
	between the true and estimated class-priors over $10$ trials.]{%
 		\includegraphics[clip, width=.47\columnwidth]{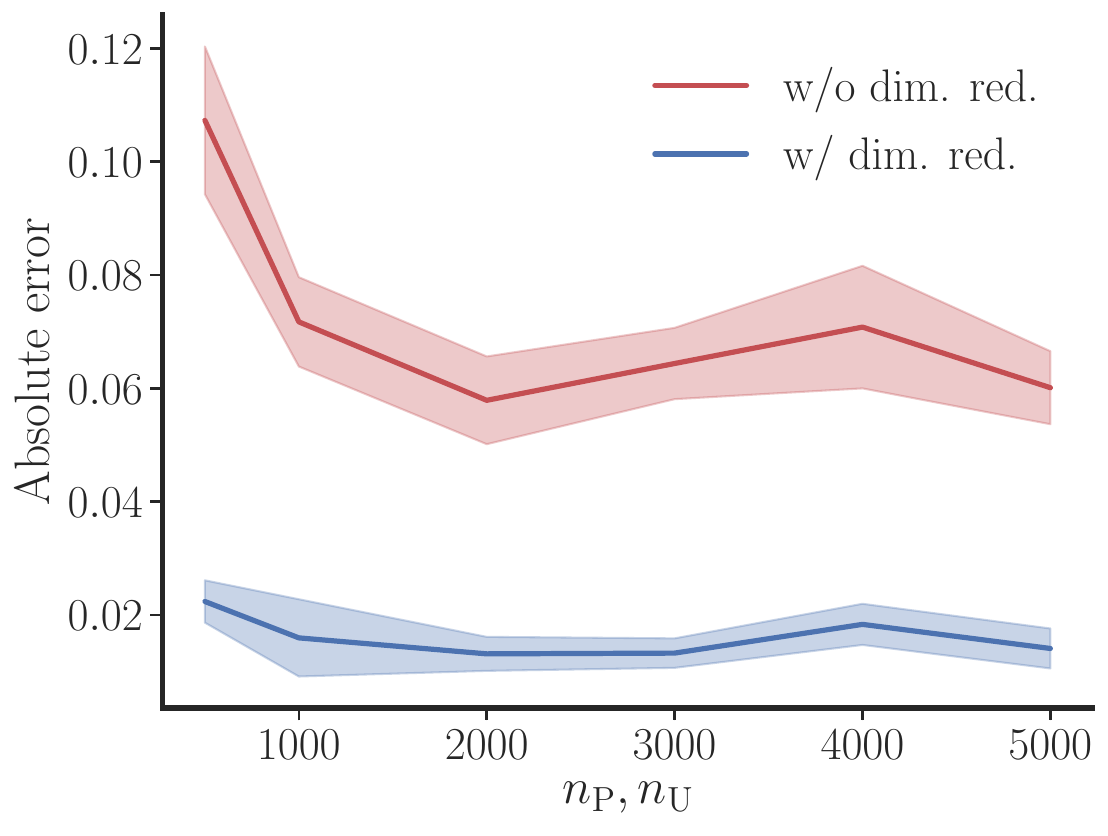}
		\label{fig:large-size-priorh}		
	}\hspace*{7mm}%
	\subfigure[Mean computation time \textrm{[}sec.\textrm{]} 
	(with its standard error) over $10$ trials.]{%
 		\includegraphics[clip, width=.47\columnwidth]{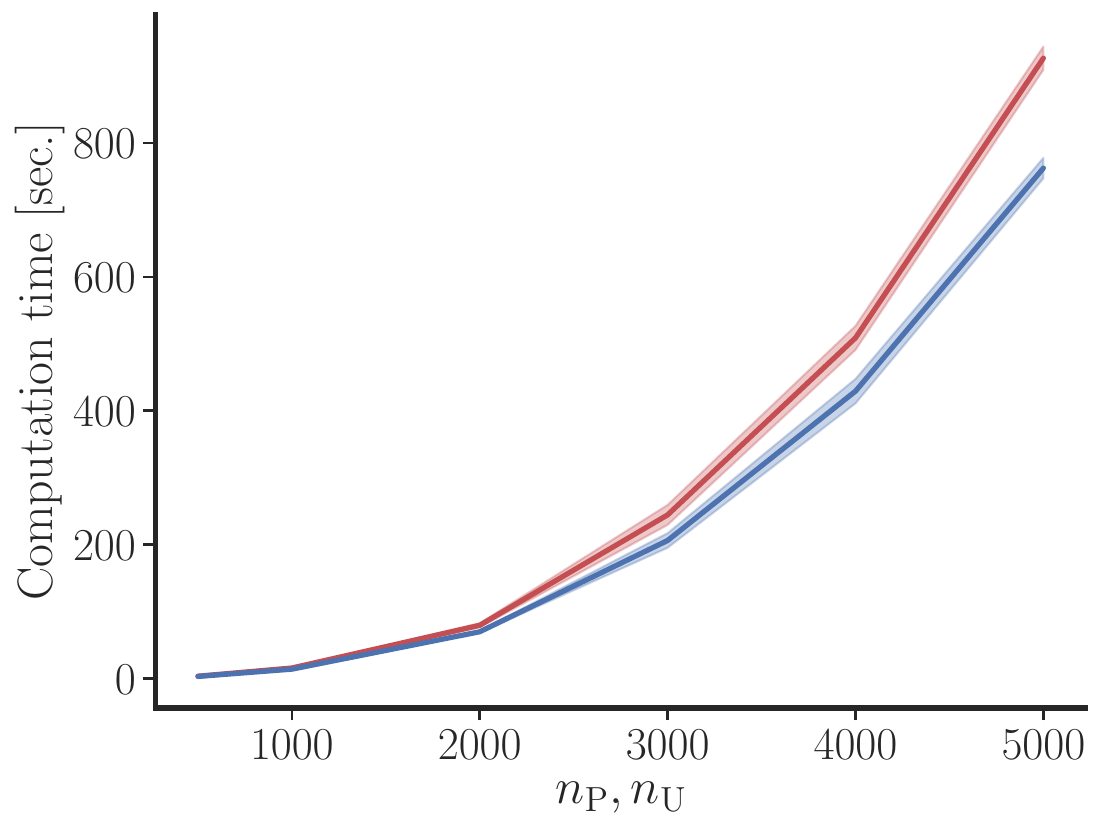} 		
		\label{fig:large-size-time}
	}%
	\caption{
	Although the error of KM without dimension reduction decreases
	the number of samples until around $n_\rP=n_\rU=2{,}000$,				
	the error of KM with dimension reduction 
	is smaller than that without dimension reduction.
	For the computation time, it grows with the number of samples.
	Since the short computation time and low absolute error
	are desirable, this result shows the effectiveness of 
	dimension reduction.  			
	}
	\label{fig:large-size}
\end{figure}

Figure~\ref{fig:large-size-priorh} shows 
the mean absolute error (with its standard error) between
the true and estimated class-priors over $10$ trials.
The error of KM without dimension reduction decreases
the number of samples until around $n_\rP=n_\rU=2{,}000$,
but the error is not reduced even if we increase $n_\rP=n_\rU=5{,}000$.  				
In contrast, at $n_\rP=n_\rU=500$, 
the error of KM with dimension reduction 
is already smaller than that without dimension reduction.
Figure~\ref{fig:large-size-time} shows
the mean computation time (with its standard error) over $10$ trials.
The computation time grows with the number of samples. 
Since the short computation time and low absolute error
are desirable, this result shows the effectiveness of dimension reduction.

\section{Supervised counterpart of the proposed method}
\label{sec:pn-smi}
In this section, we review the SMI estimation method 
\citep{BMCBio:Suzuki+etal:2009a,Entropy:Sugiyama:2013,IEICE:Sakai+Sugiyama:2014}.
 
According to \citet{NeCo:Suzuki:2013}, SMI can be exrepssed as 
\begin{align}
\SMI=\frac{1}{2}\sum_{ y\{\pm1\} }\int r^2(\bx,y)p(\bx)p(y)\dbx - \frac{1}{2} ,
\label{eq:another_form_of_smi}
\end{align}
where 
\begin{align*}
r(\bx,y):=\frac{p(\bx,y)}{p(\bx)p(y)} .
\end{align*}
Based on the \emph{Fnechel inequality} \citep{BOOK:Boyd+Vandenberghe:2004},
for any function $h\colon\bbR^d\times\{\pm1\}\to\bbR$, we have
\begin{align}
\frac{1}{2}r^2(\bx,y)\geq h(\bx,y)r(\bx,y)-\frac{1}{2}h^2(\bx,y) ,
\end{align}
where the equality condition is $h(\bx,y)=r(\bx,y)$.
We thus obtain the lower bound of SMI by
\begin{align}
\SMI\geq L(h) - \frac{1}{2}, 
\end{align}
where 
\begin{align}
L(h):=\sum_{ y\in\{\pm1\} }\int h(\bx,y)p(\bx,y)\dbx
	- \frac{1}{2}\sum_{ y\in\{\pm1\} }\int h^2(\bx,y)p(\bx)p(y)\dbx .
\end{align}
To obtain an SMI estimate, we first train $h$ with PN data by solving 
the following optimization problem: 
\begin{align}
\maximize_{h\in\mathcal{H}}\; \widehat{L}(h) ,
\end{align}
where $\mathcal{H}$ is a user-specified function class and 
$\widehat{L}$ is sample approximation of $L$, i.e.,
\begin{align}
\widehat{L}(h):=\frac{1}{n}\sum_{i=1}^n h(\bx_i,y_i)-
	\sum_{ y\in\{\pm1\} } \frac{\ph(y)}{2n}\sum_{i=1}^n h^2(\bx_i, y) - \frac{1}{2} .
\end{align}
A simple approach to approxiamtion of $\ph(y)$ is to use
the number of labeled samples, i.e., $\ph(y=+1)$ is approximated by
the number of positive samples divided by that of all labeled samples.
The SMI approximator from PN data is then given by
\begin{align*}
\PNSMIh:=\widehat{L}(\widehat{h})- \frac{1}{2} ,	
\end{align*}    
where $\widehat{h}:=\argmax_{h\in\mathcal{H}}\;\widehat{L}(h)$.
 
Similarly to the proposed PURL method, we maximize $\PNSMIh$ 
to learn low-dimensional representation.

\end{document}